\documentclass[preprint]{alt2027} 

\usepackage{preamble}

\title{Approximating Optimal Policies in Heterogeneous Asynchronous Multi-player Bandits}

\usepackage{times}

\altauthor{%
 \Name{Sacha Arroues-Paykin} \Email{sacha.arroues@polytechnique.edu }\\
 \addr Ecole Polytechnique
  \AND
  \Name{Etienne Boursier} \Email{etienne.boursier@inria.fr}\\
 \addr INRIA, LMO, Université Paris-Saclay%
 \AND
 \Name{Vianney Perchet} \Email{vianney.perchet@normalesup.org}\\
 \addr CREST, ENSAE%
}

\begin{document}

\setcounter{tocdepth}{3}

\doparttoc 
\faketableofcontents 

\maketitle

\begin{abstract}
Motivated by applications to cognitive radio networks, multi-player multi-armed bandits (MMAB) have attracted considerable attention in recent years. The asynchronous variant, in which players are active only at a fraction of timesteps, better reflects real-world decentralized networks, where devices are not permanently active. Despite its practical relevance, this setting remains relatively understudied. Existing work on asynchronous MMAB has focused on homogeneous players, who share the same activation probability, a setting in which the optimal centralized allocation can be easily characterized. 
In contrast, efficiently computing the optimal allocation in the heterogeneous setting, where players have different activation probabilities, has remained an open problem. In this work, we investigate this problem and first establish its NP-hardness. We then turn to the greedy algorithm, which runs in polynomial time, and prove that it achieves a $\frac{1}{e}$-approximation of the optimal allocation. Furthermore, when players are processed in decreasing order of their activation probabilities, the approximation guarantee improves to $\frac{1}{\sqrt{e}}$. 
Overall, our results establish the greedy algorithm as an efficient approximation to the optimal allocation, providing a practical oracle baseline for the design of online algorithms for heterogeneous asynchronous multi-player bandits.
\end{abstract}

\begin{keywords}
Multi-player multi-armed bandits, asynchronous multi-player bandits, resource allocation, approximation algorithms
\end{keywords}

\section{Introduction}

Multi-player multi-armed bandits (MMAB) provide a natural framework for modeling decentralized resource allocation in cognitive radio networks, where multiple devices compete for access to a shared set of communication channels. 
This problem extends the classical multi-armed bandit setting, in which a single player repeatedly selects an arm (e.g., a communication channel or frequency) from a finite set of options and receives a reward indicating whether a packet has been successfully transmitted. 
In the multi-player setting, each player independently selects an arm at every timestep. A player receives a reward only if they are the sole player to select that arm; otherwise, a collision occurs due to interference between players, and the reward is zero.

Originally introduced by \citet{liu2010,anandkumar2010}, the standard MMAB problem has since attracted substantial interest in the literature \citep{rosenski2016,avner2014,avner2019,besson2018}. We refer the reader to \citet{boursier2024} for a survey of MMAB. 
As first observed by \citet{boursier2019} and further developed by \citet{wang2020}, collisions can also serve as a communication mechanism between players. By exploiting collision information, players can share information about their rewards and implement nearly decentralized strategies that mimic optimal centralized ones.

A key feature enabling---or at least facilitating---such communication through collisions is synchronization: all players are assumed to act simultaneously at every timestep. This assumption, however, is not representative of many real-world communication networks, where devices are not only decentralized but also asynchronous and heterogeneous. This has motivated growing interest in asynchronous multi-player multi-armed bandits (AMMAB), starting with \citet{bonnefoi2017} and followed by, among others, \citet{dakdouk} and \citet{richard2024}.

In the asynchronous setting, only a subset of players is active at each timestep, with each player's activity governed by an independent Bernoulli random variable. Active players can then select an arm of their choice. This additional source of uncertainty makes AMMAB substantially more challenging, both in terms of communication between players and in terms of the underlying learning objective, even for a centralized learner. 
Indeed, whereas the standard MMAB problem focuses on identifying the best $N$ arms, where $N$ is the number of players, and assigning one arm to each player, AMMAB involves a more complex resource-allocation problem. Players must be assigned to arms so as to balance the \textit{occupancy} across them, where the occupancy of an arm is determined by the probability that exactly one player assigned to it is active at the next timestep.

Even the optimal allocation for a centralized agent with full knowledge of the problem parameters is therefore not straightforward to characterize. \citet{dakdouk} showed that, when players are homogeneous---that is, when they all have the same activation probability---a simple greedy algorithm yields an optimal allocation. This result subsequently paved the way for the development of no-regret algorithms for AMMAB with homogeneous players \citep{dakdouk,richard2024}. 
In practical decentralized networks, such as the Internet of Things (IoT), devices can have widely varying characteristics and, consequently, substantially different activation probabilities \citep{bonnefoi2017}. Yet, the optimal offline allocation in this heterogeneous setting has remained unknown. In particular, \citet{dakdouk} conjectured that the corresponding optimization problem might be NP-hard. In this work, we investigate this open problem, motivated by its relevance to AMMAB with heterogeneous players.

\paragraph{Contributions.} We first confirm the conjecture of \citet{dakdouk} by showing in \Cref{sec:NPhard} that the heterogeneous AMMAB allocation problem is NP-hard in full generality. Our proof relies on a reduction from the $3$-partition problem.

Motivated by this hardness result, we then turn to polynomial-time algorithms that provide provable approximation guarantees for the optimal allocation. Focusing on the greedy algorithm, which is already known to be optimal in the homogeneous case \citep{dakdouk}, we show in \Cref{sec:approximation} that it achieves a $\frac{1}{e}$-approximation in the general heterogeneous setting, and that this guarantee is tight.

Interestingly, this approximation ratio holds regardless of the order in which players are processed. When players are instead processed in decreasing order of their activation probabilities, we obtain an improved approximation guarantee of $\frac{1}{\sqrt{e}}$.

Finally, in \Cref{sec:online_and_poa}, we investigate the impact of decentralized decision-making on allocation efficiency through the price of anarchy. Without additional assumptions on the game, strategic equilibria can lead to an arbitrarily large loss in social welfare compared with the optimal centralized allocation. However, we show that this degradation can be bounded under reasonable assumptions on the arm means and the total activation probability of the players.

\section{Preliminaries and notations}\label{sec:preliminaries}
We are interested in the following instance of a centralized agent playing an AMMAB\footnote{We here focus on the offline problem, where the agent has knowledge of the environment parameters $p_i$ and $\theta_k$.}, that we will refer to as \textit{allocation problem}: 
given $N$ players with non-zero activation probabilities $p_1, \dots, p_N <1$ 
and $K$ arms with means $\theta_1, \dots , \theta_K$, the goal is to allocate players to arms in order to maximize the expected payout.

The set of possible allocations is given by $$\mathcal{S} = \{(S_1, \ldots, S_{K}) \mid S_k \subseteq [N] \text{ and } S_k \cap S_{k'} = \emptyset \text{ if } k\neq k' \}.$$ For an allocation in $\mathcal{S}$, $S_{k}$ represents the set of players assigned to arm $k$. Note that some players may be assigned to no arm, or equivalently, be assigned to a ``dummy'' arm of mean $0$.

%

When choosing an allocation, following AMMAB convention on a single timestep:
\begin{itemize}[itemsep=0pt,topsep=5pt]
    \item each player $i$ is active with probability $p_i$, independently of other players;
    \item the payout of arm $k$ is then drawn according to some random variable of expected value~$\theta_k$;
    \item the payout of arm $k$ is only collected when a single player allocated to arm $k$ is active, since multiple active players would lead to a collision event.
\end{itemize}
Consequently, the expected payout of an allocation $S\in \mathcal{S}$ is given by 
\begin{gather*}
    \mu_{\theta}(S) = \sum_{k\in [K]} \theta_kf(S_k) \\
    \text{where} \qquad f(S_k) = \sum_{i\in S_k}\frac{p_i}{1-p_i}\prod_{i\in S_k}(1-p_i).
\end{gather*}
The total expected payout is the sum of the expected payout of each arm $k$: $\theta_k$ is the average payout of the arm, while $f(S_k)$ is the probability that exactly one player is active on arm $k$. For $k\in [K]$, we will throughout refer to $\sum_{i\in S_k}\frac{p_i}{1-p_i}$ as the \textit{load} on arm $k$. An arm with load $l\geq 1$ is referred to as \textit{saturated}. 
In the offline problem, we seek to find, or approximate, the optimal allocation:
\begin{equation}
    \max_{S\in \mathcal{S}} \mu_{\theta}(S) \label{allocation_problem}.
\end{equation}
We assume without loss of generality\footnote{We can indeed always reorder and scale all arms to satisfy this constraint. Furthermore, negative and $0$ mean arms can be ignored in the offline case, as it is optimal to leave them empty.} that $1\geq \theta_1\geq\dots\geq\theta_K>0$. 
Given an allocation $S$, the marginal gain from adding a player $j$ to arm $k$ is exactly

\begin{equation}\label{marginal}
    \begin{split}
       \mu_{\theta}(S_1,\dots,S_k\cup\{j\},\dots,S_K) - \mu_{\theta}(S) & = \theta_k(f(S_k \cup \{j\})-f(S_k)) \\
       & = \theta_k p_j \prod_{i\in S_k}(1-p_i) \left(1-\sum_{i\in S_k}\frac{p_i}{1-p_i}\right).
    \end{split}
\end{equation}
This leads to the natural definition of the greedy algorithm, given by \Cref{alg:greedy} below \citep{dakdouk}. 
It processes players one by one according to some fixed order---or, equivalently, permutation $\sigma$---of the players and allocates each incoming player to the arm that improves the current allocation the most. We assume without loss of generality that the players are processed in the natural order in \Cref{alg:greedy}, i.e., $\sigma = \mathrm{Id}$.

\begin{algorithm2e}
\caption{Greedy algorithm}\label{alg:greedy}

Initialize $S_k \gets \emptyset$ for any $k\in[K]$\;
\For{$n = 1,\ldots, N$}{
\If {$\max_{\ell\in[K]} \theta_{\ell} \prod_{i\in S_{\ell}}(1-p_i) \left(1-\sum_{i\in S_{\ell}}\frac{p_i}{1-p_i}\right) >0$}{
$k \gets \arg\max_{\ell\in[K]} \theta_{\ell} \prod_{i\in S_{\ell}}(1-p_i) \left(1-\sum_{i\in S_{\ell}}\frac{p_i}{1-p_i}\right)$ \;
$S_{k} \gets S_{k}\cup \{n\}$\tcp*{allocates player $n$ to arm $k$}
}}
\Return $(S_1, \ldots, S_K)$
\end{algorithm2e}

A natural variant of the standard greedy algorithm is to process players in decreasing order of their activation probabilities, which \citet{dakdouk} introduces as \textit{descending-order reward greedy} (DORG). Equivalently, it assumes that the players are indexed such that $p_1 \geq \ldots \geq p_N$. DORG can also be viewed as a greedy algorithm that, rather than selecting for the upcoming player the arm that yields the largest improvement in the current allocation, selects at each step the most improving $(\text{player},\text{arm})$ pair to add to the current allocation. The greedy algorithm runs in time $\mathcal{O}((K + N)\log K)$ by maintaining the arms in a max-heap keyed on their current marginal\footnote{With the possibility of keying on marginal and index tuples for tie-breaking.}. 
DORG has additional runtime $\mathcal O(N\,\log N)$ from ordering players beforehand.

\citet{dakdouk} proves that, for some (unknown) ordering of the players, the greedy \Cref{alg:greedy} is optimal. In particular, this means that for homogeneous players, i.e., in the case where $p_i = p$ for all $i$, greedy yields an optimal solution to the allocation problem \eqref{allocation_problem}. Indeed, in that case, the players' order has no influence on the resulting allocation, by symmetry of the players. 
In the heterogeneous case however, finding this precise order can be difficult; and
\citet{dakdouk} conjectures the allocation problem \eqref{allocation_problem} is NP-hard in the general case.

\section{NP-hardness of the allocation problem} \label{sec:NPhard}

In this section, we prove that the allocation problem \eqref{allocation_problem} is NP-hard in general. This reveals a structural difference between homogeneous instances, for which a simple greedy algorithm yields an optimal allocation, and heterogeneous ones. It also motivates the search for polynomial-time approximation oracles carried out in \Cref{sec:approximation}.

We proceed by reduction from the well-known NP-complete 3-partition problem \citep{garey1979} defined as follows.
\begin{defin}[\textsc{3-Partition problem}]\label{def:3partition}
Let $m\geq 1$ and $B\geq 1$ be integers, and let $a_1,\ldots,a_{3m}$ be positive integers satisfying $\frac{B}{4} < a_i < \frac{B}{2}$ for all $i$ and $\sum_{i\leq 3m} a_i = mB$.
The 3-partition problem consists in deciding whether $[3m]$ can be partitioned into $m$ sets $T_1,\ldots,T_m$ such that $\sum_{i\in T_k} a_i = B$ for all $k\in[m]$.
\end{defin}
The constraint $\frac{B}{4}<a_i<\frac{B}{2}$ forces each $T_k$ to contain exactly three elements, hence the $3$-partition name.
\begin{thm}\label{reduction}
3-partition reduces in polynomial time to the following decision version of the allocation problem \eqref{allocation_problem}: given $(p_i)_{i\leq N}$, $(\theta_k)_{k\leq K}$ and a threshold $\tau^\star$, decide whether $\max_{S\in\mathcal{S}}\mu_\theta(S)\geq\tau^\star$. In particular, the allocation problem \eqref{allocation_problem} is NP-hard.
\end{thm}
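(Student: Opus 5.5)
Given an instance $(m,B,a_1,\dots,a_{3m})$ of 3-partition, I will build an allocation instance with $K=m$ arms of equal mean $\theta_k=1$ and $N=3m$ players. The activation probabilities are chosen so that the odds match the integers: $\frac{p_i}{1-p_i}=\frac{a_i}{B}$, that is $p_i=\frac{a_i}{a_i+B}\in(0,1)$. These are rationals of polynomial bit size. Writing $x_i=a_i/B$, we have $\sum_i x_i=m$ and $x_i\in(\frac14,\frac12)$. For an arm with player set $S$, the objective reads
\[
f(S)=\Big(\sum_{i\in S}x_i\Big)\prod_{i\in S}\frac{1}{1+x_i}.
\]
The intended threshold is $\tau^\star=\sum_{k}$ of the best achievable value of an arm with load exactly $1$ made of three players. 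The difficulty is that $\prod(1+x_i)$ depends on the individual $x_i$, not only on their sum. So I will not set $\tau^\star$ from the loads alone. Instead I will make the product term negligible, or uniform, by a rescaling.

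\textbf{Fixing the product term.} For a load-$1$ triple $\{x_a,x_b,x_c\}$ with sum $1$, $\prod(1+x_i)$ is not constant, and this breaks an exact threshold. My first idea was to replace each $a_i$ by $a_i+M$ for a large $M$ and $B$ by $B+3M$ (the standard shift). That does not help, since the product still varies. A cleaner option is to drop the "exactly $1$" structure and use a \emph{separable} surrogate. I will use padding: each 3-partition element becomes one player, and I add $m$ "heavy" dummy players whose role is to force exactly one group per arm.

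Honestly, the most robust route I will try is the following. Choose $x_i=\varepsilon a_i$ with $\varepsilon$ small, say $\varepsilon=1/(B\cdot\mathrm{poly})$, and add to each arm the option of a large dummy player. Then
\[
f(S)\approx \ell(1-\ell)+O(\varepsilon^2)\ \text{corrections},
\]
which reduces the problem to maximizing $\sum_k g(\ell_k)$ for a strictly concave function $g$ of the load. By strict concavity and Jensen, the sum is maximized exactly when all loads are equal, and the equal-load value is attained iff a 3-partition exists. Because the $a_i$ are integers, any unequal partition has load deviation at least $\varepsilon$, so the objective drops by at least $c\,\varepsilon^2$. The product-dependent corrections must then be shown to be $o(\varepsilon^2)$ uniformly, or else computed exactly and symmetrized.

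\textbf{Steps in order.}
\begin{enumerate}
\item Write $\log f(S)=\log L-\sum_i\log(1+x_i)$ with $L=\sum_{i\in S}x_i$. Here $\sum_i\log(1+x_i)=L-\frac12\sum_i x_i^2+\dots$, and the term $\sum_i x_i^2$, summed over all arms, is \emph{independent of the allocation} when every player is assigned. So the only allocation-dependent part is $\sum_k \log L_k - L_k$ up to higher order. I will therefore compare totals $\sum_k f(S_k)$ by taking the threshold from the equal-load configuration and bounding all cubic remainders.
\item Choose the threshold $\tau^\star$ slightly below the value of the best equal-load allocation, computed with the worst-case cubic terms. The gap must exceed the cubic error but stay below the $c\varepsilon^2$ Jensen loss of any unequal allocation.
\item Verify the remaining cases. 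Leaving a player unassigned or putting many players on one arm is suboptimal. This follows since the marginal gain \eqref{marginal} is positive while the load is below $1$.
\end{enumerate}

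\textbf{Main obstacle.} The hard step is the exact, polynomial-size separation between "a perfect partition exists" and "it does not", in spite of the heterogeneous product terms. I expect to need a careful choice of loads near the maximizer of $f$, with the scaling chosen so that the quadratic Jensen gap dominates the third-order individual terms. I also need $\tau^\star$ to remain a rational of polynomial size; if necessary I will truncate it, keeping a margin of half the gap.
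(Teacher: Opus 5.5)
Your final route is the same strategy as the paper's. You shrink the odds to $x_i=\varepsilon a_i$, expand each arm to second order in its load, and use the integrality gap $\sum_k\beta_k^2=mB^2+\sum_k(\beta_k-B)^2\geq mB^2+2$ when no 3-partition exists (with $\beta_k=\sum_{i\in S_k}a_i$). The paper instead scales $p_i=\varepsilon a_i$ and carries an allocation-independent $\varepsilon^2\sum_i a_i^2$ term.

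The genuine gap is that the proposal stops where the proof happens. You never prove a uniform bound on the remainder, never fix $\varepsilon$ or $\tau^\star$, and list this as the ``main obstacle''. Several intermediate claims also point the wrong way:
\begin{itemize}
\item Step~1 argues on $\log f(S_k)$. The allocation-independence of $\sum_i x_i^2$ is a statement about $\sum_k\log f(S_k)$, which is not the objective. In $\sum_k f(S_k)$ the factor $\exp(\frac12\sum_{i\in S_k}x_i^2)$ multiplies $L_ke^{-L_k}$. What saves you is not independence but the fact that, in the odds parametrization, individual terms enter only at order $\varepsilon^3$.
\item ``$\ell(1-\ell)+O(\varepsilon^2)$ corrections'' would be fatal, since the gap to detect is itself $2\varepsilon^2$.
\item The heavy dummy players are unspecified and unnecessary. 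If present, they push loads away from $0$ and break both the expansion and your step~3.
\item ``Loads near the maximizer of $f$'' would bring back the order-one variation of $\prod_i(1+x_i)$ that you correctly flagged at the start. The scaling must keep all loads near $0$.
\end{itemize}

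The missing estimate is short in your parametrization. For $L=\sum_{i\in S}x_i$ we have $1+L\leq\prod_{i\in S}(1+x_i)\leq e^{L}$, hence
\[
L-L^2\leq Le^{-L}\leq f(S)\leq\frac{L}{1+L}\leq L-L^2+L^3 .
\]
With $L_k=\varepsilon\beta_k$ and all players assigned, this gives
\[
\mu_\theta(S)=\varepsilon mB-\varepsilon^2\Bigl(mB^2+\sum_k(\beta_k-B)^2\Bigr)+R,\qquad 0\leq R\leq\varepsilon^3\sum_k\beta_k^3\leq\varepsilon^3(mB)^3 .
\]
Take $\varepsilon=1/(2m^3B^3)$, i.e.\ $p_i=a_i/(a_i+2m^3B^3)$. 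Note that $\varepsilon$ must shrink with $m$, not only with $B$, since a single arm may carry load $\varepsilon mB$. This yields $R\leq\varepsilon^2/2$, so $\tau^\star=\varepsilon mB-\varepsilon^2(mB^2+1)$ separates yes-instances from no-instances. Since every load is at most $\varepsilon mB<1$, the marginal gain \eqref{marginal} is positive, so the maximum is attained by an allocation using all players.

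This is precisely the content of the paper's proof. There the bound is $0\leq R_k\leq\varepsilon^3\beta_k^3/2$, obtained via a union bound and $e^{-x}\leq 1-x+\frac{x^2}{2}$. With this estimate, your odds-based version is complete and even avoids the $Q$ term.
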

 
\begin{proof}
Let $(B, a_1,\ldots,a_{3m})$ be an instance of 3-partition, and let $Q = \sum_{i\leq 3m} a_i^2$. We set
\begin{equation*}
    \varepsilon = \frac{1}{2m^3B^3}
\end{equation*}
and construct the following instance of the allocation problem:
\begin{itemize}
    \item $K = m$ arms, all of mean $\theta_k = 1$;
    \item $N = 3m$ players, with activation probabilities $p_i = a_i\varepsilon$ for $i\in[3m]$;
    \item a threshold $\tau^{\star} = mB\varepsilon + (Q - mB^2 - 1)\varepsilon^2$.
\end{itemize}
All these quantities are rationals whose numerators and denominators are polynomial in $m$ and $B$, so the construction is computable in time polynomial in the size of the 3-partition instance. Moreover $p_i < \frac{1}{4m^3B^2} \leq \frac14$, so all $p_i$ are valid probabilities. We show that
\begin{equation}\label{eq:reduction_equiv}
    \max_{S\in\mathcal{S}}\mu_\theta(S) \geq \tau^{\star} \iff (B,a_1,\ldots,a_{3m}) \text{ admits a 3-partition.}
\end{equation}
Let $S\in\mathcal S$ be any allocation. Since $p_i\leq\frac14$, we have $\frac{p_i}{1-p_i}\leq 2p_i$, and the load of any arm satisfies
\begin{align*}
    \sum_{i\in S_k}\frac{p_i}{1-p_i} 
    &\leq 2\varepsilon\sum_{i\leq 3m}a_i \\
    & = 2\varepsilon mB 
    < 1 
\end{align*}
By \Cref{marginal}, the marginal gain of adding an unassigned player to any arm is then strictly positive. Consequently, any optimal allocation uses all players. 
Let $S\in\mathcal{S}$ be an allocation using all players, and denote $\beta_k = \sum_{i\in S_k}a_i$.
For any $i\in S_k$, 
$$1-(\varepsilon \beta_k-p_i) \leq \prod_{j\in S_k, j\neq i}(1-p_j)$$
by union bound, and since $(1-x) \leq e^{-x} \leq 1-x+\frac{x^2}{2}$ for all $x\geq0$, 
 $$\prod_{j\in S_k, j\neq i}(1-p_j) \leq e^{-(\varepsilon \beta_k -p_i)}\leq1-(\varepsilon \beta_k-p_i)+\frac{(\varepsilon \beta_k-p_i)^2}{2}$$
 Multiplying by $p_i$ and summing over $i\in S_k$, we obtain, 
\begin{equation*}
    0 \leq f(S_k) - \Bigl(\varepsilon \beta_k - (\varepsilon \beta_k)^2 + \sum_{i\in S_k}p_i^2\Bigr) \leq \sum_{i\in S_k}p_i\,\frac{(\varepsilon \beta_k-p_i)^2}{2} \leq \frac{(\varepsilon \beta_k)^3}{2}.
\end{equation*}

Finally,

\begin{equation}\label{eq:expansion_arm}
    f(S_k) = \varepsilon\beta_k + \varepsilon^2\Bigl(\sum_{i\in S_k}a_i^2 - \beta_k^2\Bigr) + R_k,
    \qquad\text{with } 0\leq R_k\leq \frac{\varepsilon^3\beta_k^3}{2}.
\end{equation}

Since $\sum_{k}\beta_k = mB$, summing \Cref{eq:expansion_arm} over the $m$ arms yields
\begin{equation}\label{eq:expansion_full}
    \mu_\theta(S) = mB\varepsilon + \Bigl(Q - \sum_{k\leq m}\beta_k^2\Bigr)\varepsilon^2 + R,
\end{equation}
where
\begin{align*}
    0 \leq R &\leq \frac{\varepsilon^3}{2}\sum_{k\leq m}\beta_k^3 \\&\leq \frac{\varepsilon^3}{2} m^3B^3 = \frac{\varepsilon^2}{4}.
\end{align*}

Moreover, since $\sum_k(\beta_k - B) = 0$,
\begin{equation*}
    \sum_{k\leq m}\beta_k^2 = mB^2 + \sum_{k\leq m}(\beta_k-B)^2.
\end{equation*}
As such, 
\begin{itemize}
    \item If $\beta_k = B$ for all $k$, then \Cref{eq:expansion_full} gives
    \begin{equation*}
        \mu_\theta(S) \geq mB\varepsilon + (Q-mB^2)\varepsilon^2 > \tau^{\star}.
    \end{equation*}
    \item Otherwise, the integers $(\beta_k - B)_{k\leq m}$ sum to $0$ and are not all zero, so at least two of them are non-zero and $\sum_k(\beta_k-B)^2\geq 2$. Then \Cref{eq:expansion_full} gives
    \begin{equation*}
        \mu_\theta(S) \leq mB\varepsilon + (Q-mB^2-2)\varepsilon^2 + \frac{\varepsilon^2}{4} < \tau^{\star}.
    \end{equation*}
\end{itemize}
 
If the instance admits a 3-partition $T_1,\ldots,T_m$, then $S = (T_1,\ldots,T_m)$ is an allocation using all players with $\beta_k = B$ for all $k$, and therefore $\mu_\theta(S) > \tau^{\star}$.
 
Conversely, assume $\max_{S\in\mathcal{S}}\mu_\theta(S)\geq\tau^{\star}$. By the above, an optimal allocation must satisfy $\beta_k = B$ for all $k$. The sets $S_1,\ldots,S_m$ then partition $[3m]$ with $\sum_{i\in S_k}a_i = B$ for all $k$, i.e., they form a 3-partition.
 
This proves \Cref{eq:reduction_equiv} and concludes the reduction.
\end{proof}

\section{Approximation oracles for the allocation problem}\label{sec:approximation}

Given the NP-hardness of the general allocation problem \eqref{allocation_problem}, it is reasonable to look for a computationally efficient oracle providing an approximation of the optimal expected payout. In particular, we aim in this section to provide polynomial time $\alpha$-approximation oracles for some $\alpha\in[0,1]$, i.e., algorithms that output $\widehat{S} = (\widehat{S}_1, \ldots, \widehat{S}_K)$ such that for any problem instance $(p_1, \ldots, p_N)$ and $\theta_1, \ldots, \theta_K$, 
\begin{equation*}
    \mu_{\theta}(\widehat{S}) \geq \alpha \max_{S\in\mathcal{S}}\mu_{\theta}(S). 
\end{equation*}
The greedy \Cref{alg:greedy} described in Section~\ref{sec:preliminaries} is a natural candidate, and is already known to be optimal for homogeneous instances.
\subsection{Maximization over matroids}\label{subsec:maximization_over_matroids}

The allocation problem \eqref{allocation_problem} 
can be framed as maximizing a set function over player-arm pairs. 
Indeed, for $A\subseteq [N]\times[K]$, we can define $A_k = \{i\in [N] \mid (i,k)\in A\}$ and $\mu_\theta(A) = \sum_{k \in [K]} \theta_k f(A_k)$.

Problem~\eqref{allocation_problem} can then be seen as a maximization problem over subsets of $E=[N]\times[K]$ that assign at most one arm per player. 
Let us denote $\mathcal I \subseteq 2^E$ the nonempty collection of such possible allocations. The collection $\mathcal I$ has a specific, useful structure: it is a matroid,\footnote{It is even a partition matroid here.} i.e., it satisfies the following three properties
\begin{enumerate}
    \item it contains the empty set: $\emptyset \in \mathcal I$;
    \item \textbf{hereditary property}: for any $A,A'$ with $A'\subset A$, $A\in\mathcal I \implies A' \in\mathcal{I}$;
    \item \textbf{augmentation property}: for any $A,B\in\mathcal{I}$ with $|B|>|A|$, there exists $x\in B\setminus A$ such that $A\cup \{x\}\in\mathcal{I}$.
\end{enumerate}
Maximizing set functions under a matroid constraint has attracted quite some interest in the optimization literature, with a particular focus on the approximation ratio of greedy algorithms. However, existing works typically rely on two key assumptions on the objective function: monotonicity and submodularity. 
A set function $F : 2^E \rightarrow \mathbb R$ is said to be \textit{monotone} if $$A\subseteq B \implies F(A)\leq F(B);$$
and \textit{submodular} if
\begin{equation*}
A\subseteq B \text{ and } e \in E \setminus B \implies
    F(A \cup \{e\})-F(A) \geq F(B \cup \{e\})-F(B).
\end{equation*}

\paragraph{Monotone submodular maximization.}
When the objective function is both monotone and submodular, the greedy algorithm is known to be a $\frac{1}{2}$-approximation under any matroid constraint \citep{fisher1978}. In the case of a stronger, partition matroid constraint, its local greedy variant is also a $\frac{1}{2}$-approximation \citep{goundan2007}.
Finally, \citet{calinescu2011} showed that a continuous relaxation of the greedy algorithm reaches a $(1-\frac{1}{e})$-approximation ratio under matroid constraints.

\paragraph{Non-monotone submodular maximization.}
When the objective function is only submodular, several polynomial-time algorithms have been developed in the literature. In the unconstrained setting, these include local search \citep{feige2011} and the double-greedy algorithm \citep{buchbinder2015}. Under matroid constraints, randomized algorithms using multilinear relaxations have led to approximation ratio guarantees as high as $0.401$ \citep{buchbinder2019, buchbinder2024}.

On the other hand, \citet{gharan2011} established an upper bound for the matroid-constrained setting. More precisely, they showed that no algorithm making only a polynomial number of value queries can guarantee an approximation ratio greater than $0.478$ in full generality.

\paragraph{Allocation problem \eqref{allocation_problem}.} 
By \Cref{marginal}, the marginal gain of adding a player to arm $k$ has the sign of $1-\sum_{i\in A_k}\frac{p_i}{1-p_i}$, and is thus negative as soon as the arm is past saturation. This observation makes our objective function non-monotone, whenever loads larger than $1$ are possible. 

Moreover, adding a player $j$ to arm $k$ changes the marginal gain of any other player $i$ on that arm by
\begin{equation*}
    -\theta_k\, p_i\, p_j \prod_{l\in A_k}(1-p_l)\Bigl(2-\sum_{l\in A_k}\frac{p_l}{1-p_l}\Bigr),
\end{equation*}
which is positive once the load exceeds $2$. This observation makes our objective function non-submodular whenever loads larger than $2$ are possible.

Consequently, the problem considered here does not satisfy the usual assumptions of monotonicity and submodularity; see \Cref{lem:restricted_monotone_submod} in \Cref{annexe_preuve_sous_mod} for further details. 
We therefore rely, in the remainder of this section, on custom arguments that exploit the specific structure of the allocation problem to derive approximation guarantees for the greedy \Cref{alg:greedy}.

\subsection{Approximation bound of greedy}
Let $p_1,\dots,p_N$ and $\theta = (\theta_1, \dots, \theta_K)$ be fixed throughout.
In this subsection, we provide a lower bound on the performance of the greedy \Cref{alg:greedy}, depending on the maximum activation probability among all players.

\begin{thm}\label{thm_1_e}
    For $q = \max_{i\leq N} p_i$, the greedy \Cref{alg:greedy} is a $e^{-q}$-approximation oracle for the allocation problem \eqref{allocation_problem}, regardless of the order in which players are processed. In particular, greedy is an instance-independent $\frac{1}{e}$-approximation.
\end{thm}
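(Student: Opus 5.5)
The plan is a charging argument comparing the final greedy allocation $G=(G_1,\dots,G_K)$ with an optimal allocation $O=(O_1,\dots,O_K)$. The key objects are the \emph{per-unit marginal rates}
\begin{equation*}
\varphi(S)=\prod_{i\in S}(1-p_i)\Bigl(1-\sum_{i\in S}\frac{p_i}{1-p_i}\Bigr).
\end{equation*}
By \Cref{marginal}, adding player $j$ to arm $k$ yields gain $\theta_k p_j\varphi(S_k)$.

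\textbf{Structural facts about greedy.} First I would record three facts.
\begin{itemize}
\item Greedy only adds a player to an arm whose load is $<1$. Hence every final load satisfies $L_k<1+\frac{q}{1-q}<2$, assuming $q<\tfrac12$; otherwise I would handle this case separately or by a cruder bound.
\item On loads $\le 2$, $\varphi$ is nonincreasing under insertion. Indeed, adding $p$ maps $\pi(1-L)$ to $\pi\bigl((1-p)(1-L)-p\bigr)$, which is smaller iff $p(2-L)\ge 0$.
\item Telescoping gives $\mu_\theta(G)=\sum_j(\text{gain of } j)$.
\end{itemize}

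\textbf{Charging step.} When greedy processes player $j$, with $o(j)$ the arm of $j$ in $O$, its chosen gain is at least the gain it would have obtained on $o(j)$, namely $\theta_{o(j)}p_j\varphi(G^{(j)}_{o(j)})$. Here $G^{(j)}$ denotes the greedy state just before $j$; if greedy skips $j$, this quantity is $\le 0$ anyway. By monotonicity of $\varphi$, $\varphi(G^{(j)}_{o(j)})\ge \varphi(G_{o(j)})$. Summing over $j$ gives
\begin{equation*}
\mu_\theta(G)\;\ge\;\sum_{k}\theta_k\,\varphi(G_k)\,P_k,\qquad P_k=\sum_{j\in O_k}p_j .
\end{equation*}
Combined with $\mu_\theta(G)=\sum_k\theta_k f(G_k)$, this reduces the theorem to a per-arm inequality. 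One shows that a suitable convex combination $\lambda f(G_k)+(1-\lambda)\varphi(G_k)P_k$, with $\lambda$ possibly depending on the arm, dominates $e^{-q}f(O_k)$. The dependence on $\lambda$ must be handled so that a single global inequality results, for instance by splitting the sum into arms with $\varphi(G_k)P_k\ge f(G_k)$ and the rest.

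\textbf{Main obstacle.} The hard part is exactly this per-arm comparison. Write $\pi=\prod_{G_k}(1-p_i)$, $L$ for the load of $G_k$, and use $f(O_k)\le P_k\prod_{i\in O_k}(1-p_i)/(1-q)$. Naively taking $\lambda=\tfrac12$ only gives a $\tfrac12$-type bound, so a sharper charging is needed.
\begin{itemize}
\item Charge player $j\in O_k$ not against the final $\varphi(G_k)$ but against $\varphi$ of the greedy state at its arrival.
\item Treat separately the players lying in both $O_k$ and $G_k$. Those are accounted for by $f(G_k)$ itself, while the others only see $\varphi$ restricted to players outside $O_k$.
\end{itemize}
After that, the remaining inequality becomes one-dimensional in the load. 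The factor $e^{-q}$ should arise from $\prod(1-p_i)\ge e^{-\sum p_i/(1-p_i)}$, together with the fact that the single player pushing greedy's load past $1$ has $p\le q$. Tightness of $\frac1e$ would then follow from an instance with $q\to1$, which I would not need for the statement.
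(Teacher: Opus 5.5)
Your proof stops exactly where the content of the theorem lies. The per-arm comparison is left open, and the route you sketch for it cannot deliver $e^{-q}$.

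\textbf{The charging inequality is sound but cannot carry the proof.} The inequality itself is correct and needs no restriction $q<\frac12$. Greedy's gain for $j$ is $p_j\max\{0,\max_\ell\theta_\ell\varphi(G^{(j)}_\ell)\}$, and $\varphi(T)\le\max\{\varphi(S),0\}$ whenever $S\subseteq T$. Hence $\mu_\theta(G)\ge\sum_k\theta_k\max\{\varphi(G_k),0\}P_k$ for every $q<1$. The problem is how you use it.
\begin{itemize}
\item From $\mu_\theta(G)=\sum_k\theta_kf(G_k)$ and this inequality you can only take a single global convex combination. Splitting the arms by which term is larger and bounding each group by its own sum gives nothing better than $e^{-q}\mu_\theta(O)\le2\mu_\theta(G)$.
\item Even with a global $\lambda$, the charging term is essentially worthless where the theorem has content. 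Take one arm with $\theta_1=1$ and $N$ players of probability $p$, with $Np=\frac12$. Greedy is optimal, yet $\varphi(G_1)P_1=(\frac12-p)f(G_1)$. So $e^{-p}f(O_1)\le\lambda f(G_1)+(1-\lambda)\varphi(G_1)P_1$ forces $\lambda\ge(e^{-p}-\frac12+p)/(\frac12+p)=1-O(p)$.
\item The same happens whenever greedy is (near-)optimal while $\sum_k\theta_k\varphi(G_k)P_k$ sits a constant fraction below $\mu_\theta(G)$. The charging term then gets weight $O(q)$, and you are left to prove essentially $e^{-q}\mu_\theta(O)\le\mu_\theta(G)$ by other means, which is the theorem itself.
\end{itemize}
Your charging is the classical argument behind $\frac12$-type guarantees: it prices the optimum's mass at greedy's \emph{final} marginals. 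A ratio tending to $1$ as $q\to0$ needs the opposite information: greedy spent \emph{all} of its mass at marginals dominating the final marginals of the other arms. Your two ``sharper charging'' bullets, and the claim that the rest becomes one-dimensional in the load, are not substantiated. The comparison is inherently global, since greedy may put more mass than the optimum on some arms and less on others.

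\textbf{The missing ingredients.}
\begin{itemize}
\item \emph{Case split on saturation.} If greedy saturates every arm, truncate the last player of each greedy arm so its load is exactly $1$. This only lowers $f$ and leaves $f=\prod_i(1-p_i)\ge e^{-1}$. Meanwhile, affinity of $f$ in each $p_i$ bounds every arm of any allocation by $nq(1-q)^{n-1}\le e^{q-1}$, with $n=\lceil\frac{1-q}{q}\rceil$. Otherwise some arm is never saturated, so greedy assigns every player and its total mass $\sum_ip_i$ is at least $\sum_kP_k$.
\item \emph{Exchange argument in the unsaturated case.} Repeatedly shrink the last greedy player of an arm whose mass exceeds $c_k=\min\{P_k,1\}$. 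Move the removed mass to an arm below its target, or discard it if there is none. This never increases the payout. When greedy placed that player, its marginal $\theta_k\varphi$ was positive and at least that of the receiving arm. By $\varphi(T)\le\max\{\varphi(S),0\}$, the receiving arm's marginal cannot have grown above that level since. You end with a fictitious allocation with masses exactly $c_k$ and payout at most $\mu_\theta(G)$.
\item \emph{Fixed-mass bounds.} Any arm of mass $c\le1$ has $f\ge ce^{-c}$, by weighted AM--GM. On the other side, $f(O_k)\le P_ke^{q-P_k}\le e^{q}c_ke^{-c_k}$.
\end{itemize}
So $e^{-q}$ enters through the upper bound on the optimal arms, not through the player pushing greedy past load $1$. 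The case $q\ge\frac12$ you set aside needs no separate treatment, and a ``cruder bound'' there would not reach $e^{-q}$ anyway.
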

\begin{proof}[Sketch of proof.]
We distinguish two cases: one in which the greedy \Cref{alg:greedy} saturates all arms (and potentially has some unassigned players), and one where it does not.
The former is tackled in \Cref{annexe_saturation_complete}, in which we show that said instances yield an even larger approximation ratio, given by $\frac{1}{\lceil \frac{1-q}{q}\rceil e\frac{q}{1-q}\cdot(1-q)^{\lceil \frac{1-q}{q}\rceil}} \geq e^{-q}$.

\medskip

We now assume greedy does not saturate all arms, and therefore in particular assigns all players.
The proof hinges on two main results:
\begin{itemize}
    \item For any $S_k \subseteq [N]$ such that $\sum_{i\in S_k}p_i = m_k$ for a fixed mass $m_k$, the expected payout of arm~$k$ has closed-form upper and lower bounds that solely depend on $q$ and $m_k$. If all players have activation probability at most $q$, then for such $S_k$,
    $$f(S_k) \leq  m_k e^{q-m_k}.$$
    If moreover the mass $m_k$ is at most $1$,
    $$f(S_k) \geq m_k e^{-m_k}$$
    A proof can be found in \Cref{annexe_per-arm_loads}. Informally, the bounds establish that the best way to reach mass $m_k$ is to have few players with large activation probabilities, and the worst when $m_k \leq 1$ is to have a very large number of players with small activation probabilities.

    \item For any per-arm mass profile $c_1,\dots,c_K \in [0,1]$ such that $\sum_k c_k \leq \sum_i p_i$, there exists an allocation $\bar S$, on a set of players potentially different from $p_1,\dots,p_N$, that puts on each arm~$k$ an activation mass of exactly $c_k$, and has a worse expected payout than the allocation given by the greedy \Cref{alg:greedy}.  See \Cref{annexe_load_matching} for a full statement and proof.
\end{itemize}
Putting both of the above statements together, we can construct an allocation $\bar S$ that has worse payout than greedy and matches the capped per-arm mass profile of the optimal allocation $c_k = \min\{m_k^{\text{opt}},1\}$. Then, using the first point, we can bound the payouts of $\bar S$ on each arm.
Specifically, if we denote $q = \max_i p_i$, and $\bar \mu_\theta$ the payout function for the potentially distinct set of players used in $\bar S$, we obtain the following bound
\begin{equation}\label{bound_given_q}
    \mu_{\theta}(S_{\text{Greedy}}) \geq  \bar\mu_{\theta}(\bar S) \geq e^{-q}\mu_{\theta}(S_{\text{opt}}).
\end{equation}
We refer to \Cref{annexe_fin_preuve1/e} for a complete proof.
\end{proof}
\Cref{thm_1_e} establishes an instance-dependent approximation ratio of $e^{-q}$ that depends on $q=\max_{i\in[N]}p_i$. In full generality, we only have $q<1$, hence the general $\frac{1}{e}$-approximation bound.

However, this approximation ratio becomes closer to $1$ as $q$ gets smaller. 
In particular, \Cref{thm_1_e} establishes that greedy is almost optimal as the maximal activation probability goes to $0$, which is close to practice in an IoT context \citep{bonnefoi2017}.

Having now obtained a first result regarding greedy's payout compared to the optimum, one may wonder whether the instance-independent $\frac{1}{e}$ approximation lower bound is tight.
\begin{lem}\label{lem:1/e_tight}
Let $\varepsilon>0$. There exists an instance of the allocation problem, with $K=2$ arms and $N_\varepsilon$ players such that $\frac{\mu_{\theta}(S_{\text{Greedy}})}{\mu_{\theta}(S_{\text{opt}})} \leq \frac{1}{e}+\varepsilon$.
\end{lem}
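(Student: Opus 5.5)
We build an explicit instance in which the processing order is chosen adversarially. This is allowed, since \Cref{thm_1_e} holds for every order, so tightness only needs one bad order. The idea is to let many small players saturate the best arm first. A single large player then arrives too late: it can no longer contribute on arm $1$ and is pushed to a nearly worthless arm.

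\textbf{Instance.} Fix $n\geq 1$, $\delta\in(0,1)$ and $\eta>0$, all to be tuned. Take $K=2$ arms with $\theta_1=1$ and $\theta_2=\eta$, and $N_\varepsilon=n+1$ players. Players $1,\dots,n$ are ``small'' with $p_i=\frac{1}{n+1}$, so that $\frac{p_i}{1-p_i}=\frac1n$. Player $n+1$ is ``large'' with $p_{n+1}=1-\delta$. Greedy processes the small players first.

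\textbf{Step 1 (small players all go to arm $1$).} We argue by induction. When the $j$-th small player arrives ($j\leq n$), suppose the previous $j-1$ small players are on arm $1$ and arm $2$ is empty. By \eqref{marginal}, the arm-dependent factor is $\left(\frac{n}{n+1}\right)^{j-1}\left(1-\frac{j-1}{n}\right)$ for arm $1$ and $\eta$ for arm $2$. The arm-$1$ factor is at least $\left(\frac{n}{n+1}\right)^{n}\cdot\frac1n\geq \frac{1}{en}$. Choosing $\eta<\frac{1}{en}$ therefore forces every small player onto arm $1$, with no tie-breaking issue.

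After these $n$ steps, arm $1$ has load exactly $1$ and payout
\[
f_n = n\cdot\tfrac{1}{n+1}\left(\tfrac{n}{n+1}\right)^{n-1}\xrightarrow[n\to\infty]{}\tfrac1e .
\]

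\textbf{Step 2 (the large player goes to arm $2$).} The large player sees a marginal factor of exactly $0$ on arm $1$, since the load is $1$, and $\eta>0$ on arm $2$. So it is placed on arm $2$, and
\[
\mu_\theta(S_{\text{Greedy}}) = f_n + \eta(1-\delta).
\]

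\textbf{Step 3 (lower bound on the optimum and conclusion).} The allocation that puts the large player alone on arm $1$ and all small players on arm $2$ gives $\mu_\theta(S_{\text{opt}})\geq 1-\delta$. Hence
\[
\frac{\mu_\theta(S_{\text{Greedy}})}{\mu_\theta(S_{\text{opt}})}\leq \frac{f_n+\eta}{1-\delta}.
\]
To conclude, choose $n$ large so that $f_n\leq \frac1e+\frac\varepsilon3$. Then choose $\eta<\min\{\frac{1}{en},\frac\varepsilon3\}$. Finally choose $\delta$ small enough that $\frac{1/e+2\varepsilon/3}{1-\delta}\leq \frac1e+\varepsilon$.

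\textbf{Main obstacle.} The only delicate point is Step 1: we must check that no small player prefers arm $2$ near the end, when arm $1$'s marginal $(1-\text{load})$ becomes small. The uniform lower bound $\frac{1}{en}$ on the arm-$1$ factor, combined with $\eta<\frac{1}{en}$, handles this. The remaining estimates are routine limits.
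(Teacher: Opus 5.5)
Your proposal is correct and follows the paper's own construction. The paper's instance is yours with $n=N-1$, $\eta=\frac{1}{3N}$ and $\delta=\frac1N$, and the greedy analysis is the same uniform bound on the arm-$1$ marginal. The only difference is that you tune $\eta$ and $\delta$ independently of $n$ and merely lower-bound the optimum by $1-\delta$. This spares you from identifying the exact optimal allocation, which the paper asserts without proof.
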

\begin{proof}[Sketch of proof.]
Let $N\in\mathbb{N}_{\geq 3}$. We construct a two arm instance with $\theta_1 = 1$, $\theta_2 = \frac{1}{3N}$ and activation probabilities: $p_1=\dots=p_{N-1} = \frac{1}{N}$ and $p_N = 1 - \frac{1}{N}$. 
We show in \Cref{annexe_borne_tight} that, on these instances, the greedy algorithm assigns the first $N-1$ players to the first arm and the last player to the second, whereas the optimal allocation makes the opposite assignment.
This leads to an exact ratio of
\begin{equation*}
   \frac{\mu_{\theta}(S_{\text{Greedy}})}{\mu_{\theta}(S_{\text{opt}})} = \frac{(1-\frac{1}{N})^{N-1} + \frac{1-\frac{1}{N}}{3N}}{1-\frac{1}{N}+\frac{(1-\frac{1}{N})^{N-1}}{3N}} \xrightarrow[N\rightarrow \infty]{}\frac{1}{e}. \vspace{-1em}
\end{equation*}
\end{proof}
One therefore cannot hope for a better approximation with the greedy algorithm in all generality. %
However, as seen in the proof of \Cref{lem:1/e_tight}, pathological cases in which the greedy \Cref{alg:greedy} performs the worst seem to be those in which the most active players are seen last, and therefore cannot be assigned to the best arms. 
%

This shortcoming is in particular addressed if greedy processes players in decreasing order of activation probability, i.e., DORG.

\subsection{Approximation bound for DORG}

As a particular instance of the greedy \Cref{alg:greedy}, DORG is already known to be a $\frac{1}{e}$-approximation algorithm, thanks to \Cref{thm_1_e}. 
Particularly motivated by the upper bound examples of \Cref{lem:1/e_tight}, we study in this section whether DORG can lead to an improved approximation ratio, with respect to proceeding in a worst case order of the players.
\begin{thm}\label{thm:DORG_oracle}
DORG is a $\frac{1}{\sqrt e}$-approximation oracle for the allocation problem \eqref{allocation_problem}.
\end{thm}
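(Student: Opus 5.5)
We will reuse the architecture of the proof of \Cref{thm_1_e} and change only the point where the factor $e^{-q}$ enters. The saturated case is treated as in \Cref{annexe_saturation_complete}. There we need only check that the ratio $\frac{1}{\lceil \frac{1-q}{q}\rceil e\frac{q}{1-q}(1-q)^{\lceil \frac{1-q}{q}\rceil}}$, or a DORG-specific variant of it, stays above $\frac{1}{\sqrt e}$. For DORG we expect the saturated arms to be filled by the largest players first, which should help this case.

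For the unsaturated case, write $q = p_1$ for the largest activation probability, the first player processed by DORG. If $q\le \frac12$, \Cref{thm_1_e} already gives the ratio $e^{-q}\ge e^{-1/2}$, and we are done. So the real work is the regime $q>\frac12$.

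In that regime we split players into \emph{large} ones ($p_i>\frac12$) and \emph{small} ones ($p_i\le\frac12$). Two facts drive the argument. First, under DORG the large players are processed first, and each of them goes to a distinct arm among the best arms, unless saturation forbids it. This holds because once an arm holds a player with $p>\frac12$, its load exceeds $1$ and its marginal becomes nonpositive; so DORG places the large players on arms $1,\dots,L$ exactly. Second, a large player alone on an arm contributes $p_i\theta_k$. This is optimal for any arm containing it in any allocation, since $f(S)\le \max_{i\in S} p_i$ whenever $S$ contains a player of mass above $\frac12$ (we will verify this by a short monotonicity computation).

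We therefore compare arm by arm. On arms $1,\dots,L$, greedy earns $\sum_{i\le L}\theta_i p_i$. A rearrangement argument, pairing the sorted large players with the sorted best arms, should show that OPT cannot earn more from the large players. On the remaining arms, greedy faces a residual instance containing only small players. Here we rerun the load-matching construction of \Cref{annexe_load_matching} together with the per-arm bounds of \Cref{annexe_per-arm_loads}, this time with $q\le \frac12$, which yields the factor $e^{-1/2}$ on that part. Summing the two parts gives the claim.

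The main obstacle is the interaction between the two parts. OPT may spend some arms $\le L$ on small players and put large players on worse arms, or mix small players into an arm containing a large one. A clean split of OPT into a large-player part and a small-player part is then not immediate. We would first try an exchange argument: transform OPT, without decreasing its value by more than a controlled amount, into an allocation where the large players occupy arms $1,\dots,L$ alone in sorted order. We would then apply the mass-profile comparison only to arms $>L$, with capped masses $c_k=\min\{m_k^{\text{opt}},1\}$. If the exchange loses value, we would instead incorporate the large players into the mass-profile comparison directly. That route means extending the lower bound $f\ge m e^{-m}$ to arms where one player has mass above $\frac12$, which is where we expect the computations to get delicate.
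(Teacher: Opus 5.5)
Your decomposition is exactly the paper's: split at $\frac12$, put large players alone on the best arms, and apply \Cref{thm_1_e} with $q\le\frac12$ to the rest. However, the proposal leaves open the step that carries the proof, and its case structure fails in the saturated regime. The exchange you flag as the main obstacle is the paper's key lemma, and it loses no value. You already have that an arm containing a player of mass above $\frac12$ can be cut down to its largest player, with the freed players left unassigned. You also need the following: if $p_j>\frac12$ and $T$ is any set of players with all $p_i\le p_j$ (possibly all small), then $f(T)\le p_j$. This holds because $f$ is affine in each $p_i$, so its maximum over $[0,p_j]^{|T|}$ is attained at a vertex, where it equals $np_j(1-p_j)^{n-1}\le p_j$.

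Now treat the large players in decreasing order. Suppose that in OPT arm $j$ holds $T$ and player $j$ sits alone on arm $k\ge j$ (arms $<j$ already hold the larger ones). Swapping the contents of the two arms changes the payout by $(\theta_j-\theta_k)(p_j-f(T))\ge 0$. If player $j$ is unassigned, replacing $T$ by $\{j\}$ changes it by $\theta_j(p_j-f(T))\ge 0$. A rearrangement inequality on the large players alone would not suffice, since OPT could give up large-player value on the top arms in exchange for small-player value there; the swap inequality compares totals.

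Your fallback of feeding the large players into the mass-profile comparison also fails. On an arm where OPT holds a single player of mass $p$, the degraded allocation $\bar S$ is only guaranteed $pe^{-p}$, so you get back a factor $e^{-p}$, which tends to $\frac1e$.

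The saturated case cannot be handled ``as in \Cref{annexe_saturation_complete}'' when $q>\frac12$. There $\lceil\frac{1-q}{q}\rceil=1$, so that ratio equals $\frac{1}{eq}$, which drops below $\frac{1}{\sqrt e}$ once $q>e^{-1/2}$. The fix is to perform the split first, regardless of saturation, and then assemble as follows:
\begin{enumerate}
\item After the exchange, DORG and OPT both earn $\sum_{k\le\min\{\kappa,K\}}\theta_kp_k$ on the top arms, with $\kappa$ the number of large players.
\item DORG restricted to the small players is exactly greedy on the residual instance (the remaining arms, small players only), because arms holding a large player have negative marginal.
\item \Cref{thm_1_e}, in both its saturated and unsaturated cases, then gives a factor $e^{-1/2}$ against the residual optimum, which dominates what OPT earns on those arms.
\end{enumerate}
No re-run of the load-matching construction is needed.
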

\begin{proof}[Sketch of proof.]
 We show that DORG and an optimal allocation assign players of activation probability $p > \frac{1}{2}$ the same way. Furthermore, both DORG and optimal allocations have such players as sole occupants of their respective arms. From there, DORG can only differ from said optimal allocation on players with activation probabilities smaller than $\frac{1}{2}$, and on a subset of arms. The instance-dependent bound of \Cref{thm_1_e} then yields a $\frac{1}{\sqrt e}$ approximation ratio on those players. See \Cref{annexe_dorg} for a full proof.
\end{proof}
As the proof sketch above illustrates, the key advantage of DORG lies in how it processes players with activation probabilities larger than $\frac12$. Indeed, the $\frac{1}{\sqrt{e}}$ approximation guarantee holds for any greedy algorithm that processes players with activation probabilities greater than $\frac12$ in decreasing order, regardless of the order in which players with smaller activation probabilities are then processed. In other words, the guarantee does not rely on DORG processing players with small activation probabilities in decreasing order. 
We conjecture that exploiting this additional structure in the analysis could lead to an improved approximation guarantee for DORG.

\section{Online and decentralized considerations}\label{sec:online_and_poa}

The section above proves that we have at hand a deterministic $\alpha$-oracle for some $\alpha\geq\frac{1}{\sqrt e}$, that might depend on the considered instance. Fixing such an $\alpha$, if player activation probabilities are known and arm means are not, and given the payout structure of the allocation problem, we are in the scope of combinatorial bandits with probabilistically triggered arms.
The seminal work of \citet{wangchen2017} then bounds the $(\alpha,1)$-approximation regret of the CUCB algorithm by 
\begin{equation*}
    \text{Reg}_{ \theta,\alpha,1}(T) = \mathcal{O}\left(\sqrt{K\min \{K, N\}T\ln T}\right).
\end{equation*}
We refer to \Cref{annexe_cucb_constants} for further details, and additional distribution dependent bounds.

The direct use of CUCB mentioned above relies on a central learning agent, that can unilaterally assign players to different arms at each round. The AMMAB problem is however mostly motivated by a decentralized application, where each player corresponds to a single device. In the cooperative decentralized case, it is still possible to adapt centralized learning strategies, either via the direct use of communication channels \citep{richard2024}, or via the use of collision information and sophisticated protocols \citep{boursier2019}.

Consequently, typical MMAB works can be adapted to the baseline of the greedy algorithm stated above for deriving decentralized cooperative learning algorithms. We instead focus in the next subsection on the non-cooperative decentralized instance, where players aim to maximize their own expected payout. Here again, the offline problem already provides a lot of interesting insights on this case.

\subsection{Decentralized play and price of anarchy}

In this section, we establish how the absence of a central controller affects the total expected payout in an offline case, via the study of the price of anarchy.

We therefore now let the players choose their arms themselves. Throughout this subsection, the arm means $\theta_1\geq\dots\geq\theta_K>0$ and the activation probabilities $p_1,\dots,p_N$ are fixed and known to all players. We wish to know by how much overall payout is reduced when the allocation emerges from selfish behavior rather than from centralized policies.

We define the decentralized allocation problem as follows: each player $j\leq N$ chooses an arm $s_j\in[K]$. With choices $s=(s_1,\dots,s_N)$, the allocation is $S(s)=(S_1,\dots,S_K)$ with $S_k=\{j \mid\ s_j=k\}$. The expected payout of player $j$ is therefore
\begin{align*}
    u_j(s) &= p_j\,\theta_{s_j}\prod_{i\neq j , s_j=s_i}(1-p_i)\\
    &=p_j \, \theta_{s_j}\frac{z_{s_j}}{1-p_j}
\end{align*}
where $z_k = \prod_{i, s_i=k} (1-p_i)$. 

A profile $s$ is a pure Nash equilibrium if no player gains by switching arms, i.e., if for every player $j$ and every arm $k\neq s_j$,
$$\theta_{s_j}\,\frac{z_{s_j}}{1-p_j}\geq\theta_k\,z_k.$$
Let $\mathcal N$ be the set of pure Nash equilibria (see \Cref{annexe_jeu_potentiel} for a proof that $\mathcal N \neq \emptyset$) and $S_{\text{opt}}$ a centralized optimal allocation. The price of anarchy and the price of stability \citep{koutsoupias1999, anshelevich2008} are defined as
$$\mathrm{PoA} = \frac{\mu_\theta(S_{\text{opt}})}{\min_{s\in\mathcal N}\mu_\theta(S(s))},\qquad\quad \mathrm{PoS} = \frac{\mu_\theta(S_{\text{opt}})}{\max_{s\in\mathcal N}\mu_\theta(S(s))}.$$
Both the price of anarchy (PoA) and the price of stability (PoS) quantify the cost of decentralization, or lack of cooperation, among agents. Specifically, the PoA is the ratio between the social welfare achieved by an optimal allocation and that achieved at the worst Nash equilibrium, while the PoS uses the best Nash equilibrium instead. 

Since all players opt to play (their expected payout is always strictly positive), situations with many players and few arms are problematic due to arms populating past saturation. It leads to the following unboundedness of both PoA and PoS.

\begin{lem}\label{lem:unbounded_poa}
Both the price of anarchy and price of stability are unbounded for unconstrained instances of the decentralized allocation problem.
\end{lem}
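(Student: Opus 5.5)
To prove \Cref{lem:unbounded_poa}, we exhibit a family of instances in which every pure Nash equilibrium has vanishing social welfare, while the centralized optimum stays bounded away from zero. Since $\mathrm{PoS}\leq\mathrm{PoA}$, it suffices to show that $\mathrm{PoS}$ is unbounded. The mechanism is the one suggested just before the statement. Every player always gets a strictly positive payout $u_j(s)>0$, so nobody abstains. With many players and a single arm, the arm is therefore forced far past saturation, whereas the central planner can leave players unassigned (the dummy arm).

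\textbf{Construction.} Take $K=1$, $\theta_1=1$, and $N$ homogeneous players with $p_i=p$ for a fixed $p\in(0,1)$, for instance $p=\frac12$.

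\textbf{Equilibrium welfare.} In the decentralized game each player must choose an arm in $[K]=\{1\}$. The only strategy profile is therefore $s=(1,\dots,1)$, which is trivially the unique Nash equilibrium. Its welfare is $f([N])=Np(1-p)^{N-1}$, which tends to $0$ as $N\to\infty$.

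\textbf{Optimal welfare.} The centralized allocation may assign a single player, giving $\mu_\theta(S_{\text{opt}})\geq p$.

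\textbf{Conclusion.} Combining the two bounds gives
\[
\mathrm{PoA}=\mathrm{PoS}\geq\frac{1}{N(1-p)^{N-1}}\xrightarrow[N\to\infty]{}\infty .
\]

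\textbf{Possible objection and fix.} One might worry that $K=1$ is degenerate, since an equilibrium notion with a single strategy is vacuous. To address this, I would also give a version with $K\geq 2$ arms. Take $\theta_1=1$ and $\theta_2=\dots=\theta_K=\delta$ with $\delta$ small, and $N$ large. The key step, which is the main one to check, is to show that at any Nash equilibrium every arm must be heavily loaded. Suppose arm $k$ has $z_k\geq c$ for some constant $c$ (light load). Then any player $j$ on a heavy arm $k'$ has $u_j=p\,\theta_{k'} z_{k'}/(1-p)$, which becomes arbitrarily small as that arm fills up. Such a player would deviate to arm $k$, since $\theta_k z_k\geq\delta c$ exceeds it. Hence in equilibrium all the products $\theta_k z_k$ are roughly equalized. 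By pigeonhole, some arm carries at least $N/K$ players, so $z_k\leq(1-p)^{N/K}$ there, and the equalization forces every $\theta_\ell z_\ell$ to be small. Summing $\theta_k f(S_k)=\theta_k z_k\cdot|S_k|\,p/(1-p)$ over arms, the welfare is at most of order $N(1-p)^{N/K}/\delta$ up to constants. This tends to $0$ for fixed $K$ and $\delta$, while the optimum stays at least $p$. The main obstacle is making the equalization inequality $\theta_{s_j}z_{s_j}/(1-p)\geq\theta_k z_k$ quantitative. This follows directly from the equilibrium condition stated above, applied to a player on the most loaded arm.
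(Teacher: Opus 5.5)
Your proof is correct. The $K=1$ argument is essentially the paper's construction with its negligible arm deleted. The paper takes $K=2$, $\theta_1=1$, $\theta_2=4^{-N}$ and $p_i=\frac12$. It checks that the unique equilibrium puts every player on arm $1$, with welfare $N2^{-N}$, while the planner places one player per arm and leaves the rest idle. This gives $\mathrm{PoA}=\mathrm{PoS}\geq 2^{N-1}/N$, which is exactly your bound at $p=\frac12$. The tiny second arm is there to answer the degeneracy objection you raise yourself: players have a genuine choice, yet the equilibrium stays unique and explicit.

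Your $K\geq 2$ extension takes a different and more general route. Instead of identifying the equilibrium, you bound every equilibrium. Pigeonhole gives an arm $k$ with $n_k\geq N/K$, and the equilibrium condition for a player on it gives $\max_\ell \theta_\ell z_\ell\leq \theta_k z_k/(1-p)\leq (1-p)^{N/K-1}$. Hence $\mu_\theta(S(s))=\sum_\ell \theta_\ell z_\ell\, n_\ell\,\frac{p}{1-p}\leq Np(1-p)^{N/K-2}$.

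This step needs neither $\delta$ small nor the heuristic ``light arm'' deviation argument. The factor $1/\delta$ in your welfare bound is superfluous: the estimate holds for arbitrary means in $(0,1]$, even all equal. So your version shows the blow-up is driven purely by the average load $N/K\to\infty$, not by one arm dominating the others as the paper's remark after the lemma suggests. This is consistent with the $e^{x-1}/x$ growth in \Cref{thm:poa_bound}. In exchange, the paper's route gives uniqueness of the equilibrium, and hence the exact identity $\mathrm{PoA}=\mathrm{PoS}$, with no estimation needed.
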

\begin{proof}[Sketch of proof.]
For $N\geq2$, the instance with two arms of means $\theta_1=1$, $\theta_2=4^{-N}$ and $N$ players of activation probability $\frac{1}{2}$ satisfies $\mathrm{PoA}=\mathrm{PoS}\geq\frac{2^{N-1}}{N}$. See \Cref{annexe_unbounded_poa} for a full proof.
\end{proof}
The proof of \Cref{lem:unbounded_poa} constructs decentralized scenarios in which removing the central controller can lead to an arbitrarily large loss in social welfare. However, these scenarios rely on extreme settings in which one arm is much better than all others, and there is enough total player load to push said arm well past saturation.

Instead, we now investigate a constrained but realistic case in which all arm means are in some fixed interval, and therefore players distribute more evenly over available arms. As such, if the total capacity on arms is roughly equal to the total load of players, we can reasonably hope for a bounded PoA.

\begin{thm}\label{thm:poa_bound}
Assume that $\max_i p_i =  q < 1$ and $\theta_k\in[\bar \theta-\varepsilon_1,\bar \theta+\varepsilon_2]$ with $0\leq\varepsilon_1, \varepsilon_2<\bar\theta$ and some $\bar\theta>0$. Then every pure Nash equilibrium $s$ of the allocation game satisfies

$$\mu_{\theta} (S(s)) \geq (1-q)(\bar\theta-\varepsilon_1)Le^{-x}, \qquad\quad \mu_\theta(S_{\text{opt}}) \leq \frac{\bar\theta+\varepsilon_2}{1-q} K h \bigl((1-q)x\bigr)$$
and
$$\mathrm{PoA}\leq \frac{\bar\theta+\varepsilon_2}{\bar\theta-\varepsilon_1}\cdot \begin{cases} \frac{  e^{qx}  } { 1-q  } & \text{ if } x \leq \frac {1}{1-q},\\[3mm]  \frac{  1  }{  (1-q)^{2}  } \cdot \frac{   e^{x-1}   }{  x}   & \text{if } x \geq \frac{1}{1-q}, \end{cases}$$ where $L = \sum_{i\leq N} \frac{p_i}{1-p_i}$ is the total load, $x = \frac{L}{K}$ is the average load and $h(t)=g(\min(t,1))$, for $g:t\mapsto te^{-t}$.
\end{thm}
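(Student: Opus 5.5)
The plan is to bound the welfare of an arbitrary pure Nash equilibrium from below player by player, bound the optimum from above arm by arm through a concave envelope, and then divide the two bounds. Throughout, write $l_i = \frac{p_i}{1-p_i}$ for the load of player $i$, $l_k=\sum_{i\in S_k} l_i$ for the load of arm $k$, and $z_k=\prod_{i\in S_k}(1-p_i)$. With this notation $f(S_k)=l_k z_k$, and the payoff of player $j$ is $u_j(s)=l_j\,\theta_{s_j} z_{s_j}$. Two elementary inequalities do most of the work. First, $\log(1-p)\geq -\frac{p}{1-p}$, which gives $z_k\geq e^{-l_k}$. Second, $p_i = l_i(1-p_i)\geq (1-q)l_i$, which gives $z_k\leq e^{-\sum_{i\in S_k}p_i}\leq e^{-(1-q)l_k}$.

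\emph{Lower bound at equilibrium.} Fix a pure Nash equilibrium $s$ and a player $j$. The equilibrium condition gives $\theta_{s_j}z_{s_j}\geq (1-p_j)\theta_k z_k\geq (1-q)(\bar\theta-\varepsilon_1)z_k$ for every arm $k$. For $k=s_j$ this holds trivially, since $\theta_{s_j}\geq\bar\theta-\varepsilon_1$ and $1-q\leq 1$. We then choose $k$ to maximise $z_k$ and bound the maximum by the geometric mean:
\[\max_k z_k\;\geq\;\Bigl(\prod_k z_k\Bigr)^{1/K}\;\geq\;\exp\Bigl(-\tfrac1K\textstyle\sum_k l_k\Bigr)=e^{-x}.\]
The last equality uses that every player is assigned at equilibrium, since payoffs are strictly positive, so $\sum_k l_k=L$. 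Hence $u_j\geq (1-q)(\bar\theta-\varepsilon_1)\,l_j e^{-x}$. Summing over $j$, and using that $\mu_\theta(S(s))=\sum_j u_j$, yields $\mu_\theta(S(s))\geq(1-q)(\bar\theta-\varepsilon_1)Le^{-x}$.

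\emph{Upper bound on the optimum.} For any allocation, the second elementary inequality gives
\[f(S_k)\leq l_k e^{-(1-q)l_k}=\frac{g((1-q)l_k)}{1-q}\leq \frac{h((1-q)l_k)}{1-q}.\]
The last step holds because $g$ is maximised at $1$, so $g(t)\leq g(1)=h(t)$ for $t\geq 1$. The function $h$ is concave and nondecreasing on $[0,\infty)$: indeed $g''(t)=(t-2)e^{-t}<0$ on $[0,1]$, $g'(1)=0$, and $h$ is constant afterwards. The loads of the optimal allocation satisfy $\sum_k l_k\leq L$, since some players may be unassigned. Applying Jensen's inequality and then monotonicity of $h$ gives $\sum_k h((1-q)l_k)\leq K h\bigl((1-q)\frac{\sum_k l_k}{K}\bigr)\leq K h((1-q)x)$. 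Multiplying by $\theta_k\leq\bar\theta+\varepsilon_2$ gives the claimed bound on $\mu_\theta(S_{\text{opt}})$.

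\emph{PoA.} Divide the two bounds and use $L=Kx$, so that
\[\mathrm{PoA}\leq\frac{\bar\theta+\varepsilon_2}{\bar\theta-\varepsilon_1}\cdot\frac{h((1-q)x)}{(1-q)^2 x e^{-x}}.\]
If $(1-q)x\leq 1$, then $h((1-q)x)=(1-q)xe^{-(1-q)x}$ and the second factor equals $\frac{e^{qx}}{1-q}$. Otherwise $h((1-q)x)=e^{-1}$ and the second factor equals $\frac{e^{x-1}}{(1-q)^2x}$. These are the two cases of the statement.

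The step needing the most care is the equilibrium lower bound. The key idea is to compare every player with the single arm of largest $z_k$, and to control that maximum via the geometric mean of the $z_k$ together with the inequality $\log(1-p)\geq -\frac{p}{1-p}$. I expect the rest to be routine once the concavity of $h$ is checked.
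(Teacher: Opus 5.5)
Your proof is correct and follows essentially the paper's argument. Your per-player summation $u_j = l_j\theta_{s_j}z_{s_j}$ is the paper's identity $\mu_\theta(S(s))=\sum_k\lambda_k l_k$ with $\lambda_k=\theta_k z_k$, your geometric-mean bound $\max_k z_k\geq e^{-x}$ is the paper's observation that some arm has log-load $W_{k'}\leq x$, and the upper bound via $z_k\leq e^{-(1-q)l_k}$, $g\leq h$ and Jensen is identical. You are in fact slightly more careful than the paper: you check the concavity of $h$ explicitly, and you use its monotonicity to cover optimal allocations that leave players unassigned ($\sum_k l_k\leq L$).
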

In particular, \Cref{thm:poa_bound} implies that decentralized selfish policies are asymptotically optimal in mean-field unsaturated regimes with equal arm means; and more generally, they incur a low price of anarchy (and price of stability) when the arms have relatively similar means. 
The complete proof is given in \Cref{annexe_preuve_poa}.

\section{Conclusion}
In this work, we established the NP-hardness of the heterogeneous allocation problem and showed that the allocations produced by the greedy \Cref{alg:greedy} and DORG achieve expected payouts within multiplicative factors of the optimum. While the approximation guarantee for the greedy algorithm is tight, we believe that a more refined analysis could yield improved guarantees for DORG.

Our approximation guarantees provide a foundation for adapting centralized learning algorithms, such as CUCB and potentially the Cautious Greedy algorithm of \citet{richard2024}, to the heterogeneous asynchronous setting.

Extending these results to the decentralized setting remains a major open challenge. Without a central controller, communication is no longer free, and adapting communication protocols such as those introduced by \citet{boursier2019} becomes more challenging in an asynchronous environment. Heterogeneous activation probabilities introduce an additional difficulty: players no longer play symmetric roles in the communication process. Efficient protocols must therefore balance rapid information exchange between frequently active players with the ability to conduct multiple exchanges in parallel. Addressing these challenges is an important direction for future work toward designing efficient learning policies for asynchronous multi-player bandits with heterogeneous players.

\section*{AI Disclosure}

On top of routine assistance with spelling, and more general proofreading of the present paper, LLMs (Claude Opus, used through Claude Code) were used to identify the 3-partition problem as a good candidate for the reduction argument of \Cref{sec:NPhard}. The LLM's attempt at proving said reduction was however incorrect, and the proof found in \Cref{sec:NPhard} was obtained by hand.


\bibliography{main}

@phdthesis{dakdouk,
  title={Massive multi-player multi-armed bandits for internet of things networks},
  author={Dakdouk, Hiba},
  year={2022},
  school={Ecole nationale sup{\'e}rieure Mines-T{\'e}l{\'e}com Atlantique}
}

@inproceedings{anandkumar2010,
  title={Opportunistic spectrum access with multiple users: Learning under competition},
  author={Anandkumar, Animashree and Michael, Nithin and Tang, Ao},
  booktitle={2010 Proceedings IEEE INFOCOM},
  pages={1--9},
  year={2010},
  organization={IEEE}
}

@inproceedings{avner2014,
  title={Concurrent bandits and cognitive radio networks},
  author={Avner, Orly and Mannor, Shie},
  booktitle={Joint European Conference on Machine Learning and Knowledge Discovery in Databases},
  pages={66--81},
  year={2014},
  organization={Springer}
}

@inproceedings{rosenski2016,
  title={Multi-player bandits--a musical chairs approach},
  author={Rosenski, Jonathan and Shamir, Ohad and Szlak, Liran},
  booktitle={International Conference on Machine Learning},
  pages={155--163},
  year={2016},
  organization={PMLR}
}

@inproceedings{besson2018,
  title={Multi-player bandits revisited},
  author={Besson, Lilian and Kaufmann, Emilie},
  booktitle={Algorithmic Learning Theory},
  pages={56--92},
  year={2018},
  organization={PMLR}
}

@inproceedings{bonnefoi2017,
  title={Multi-armed bandit learning in iot networks: Learning helps even in non-stationary settings},
  author={Bonnefoi, R{\'e}mi and Besson, Lilian and Moy, Christophe and Kaufmann, Emilie and Palicot, Jacques},
  booktitle={International Conference on Cognitive Radio Oriented Wireless Networks},
  pages={173--185},
  year={2017},
  organization={Springer}
}

@article{goundan2007,
  title={Revisiting the greedy approach to submodular set function maximization},
  author={Goundan, Pranava R and Schulz, Andreas S},
  journal={Optimization online},
  volume={1},
  year={2007}
}

@article{calinescu2011,
  title={Maximizing a monotone submodular function subject to a matroid constraint},
  author={Calinescu, Gruia and Chekuri, Chandra and Pal, Martin and Vondr{\'a}k, Jan},
  journal={SIAM Journal on Computing},
  volume={40},
  number={6},
  pages={1740--1766},
  year={2011},
  publisher={SIAM}
}

@article{feige2011,
  title={Maximizing non-monotone submodular functions},
  author={Feige, Uriel and Mirrokni, Vahab S and Vondr{\'a}k, Jan},
  journal={SIAM Journal on Computing},
  volume={40},
  number={4},
  pages={1133--1153},
  year={2011},
  publisher={SIAM}
}

@article{buchbinder2015,
  title={A tight linear time (1/2)-approximation for unconstrained submodular maximization},
  author={Buchbinder, Niv and Feldman, Moran and Seffi, Joseph and Schwartz, Roy},
  journal={SIAM Journal on Computing},
  volume={44},
  number={5},
  pages={1384--1402},
  year={2015},
  publisher={SIAM}
}

@article{buchbinder2019,
  title={Constrained submodular maximization via a nonsymmetric technique},
  author={Buchbinder, Niv and Feldman, Moran},
  journal={Mathematics of Operations Research},
  volume={44},
  number={3},
  pages={988--1005},
  year={2019},
  publisher={INFORMS}
}

@inproceedings{gharan2011,
  title={Submodular maximization by simulated annealing},
  author={Gharan, Shayan Oveis and Vondr{\'a}k, Jan},
  booktitle={Proceedings of the twenty-second annual ACM-SIAM symposium on Discrete Algorithms},
  pages={1098--1116},
  year={2011},
  organization={SIAM}
}

@article{avner2019,
  title={Multi-user communication networks: A coordinated multi-armed bandit approach},
  author={Avner, Orly and Mannor, Shie},
  journal={IEEE/ACM Transactions on Networking},
  volume={27},
  number={6},
  pages={2192--2207},
  year={2019},
  publisher={IEEE}
}

@article{boursier2019,
  title={SIC-MMAB: Synchronisation involves communication in multiplayer multi-armed bandits},
  author={Boursier, Etienne and Perchet, Vianney},
  journal={Advances in Neural Information Processing Systems},
  volume={32},
  year={2019}
}

@article{boursier2024,
  title={A survey on multi-player bandits},
  author={Boursier, Etienne and Perchet, Vianney},
  journal={Journal of Machine Learning Research},
  volume={25},
  number={137},
  pages={1--45},
  year={2024}
}

@Inbook{fisher1978,
author="Fisher, M. L.
and Nemhauser, G. L.
and Wolsey, L. A.",
editor="Balinski, M. L.
and Hoffman, A. J.",
title="An analysis of approximations for maximizing submodular set functions---II",
bookTitle="Polyhedral Combinatorics: Dedicated to the memory of D.R. Fulkerson",
year="1978",
publisher="Springer Berlin Heidelberg",
pages="73--87",
}

@article{wangchen2017,
  title={Improving regret bounds for combinatorial semi-bandits with probabilistically triggered arms and its applications},
  author={Wang, Qinshi and Chen, Wei},
  journal={Advances in neural information processing systems},
  volume={30},
  year={2017}
}

@inproceedings{richard2024,
  title={Constant or logarithmic regret in asynchronous multiplayer bandits with limited communication},
  author={Richard, Hugo and Boursier, Etienne and Perchet, Vianney},
  booktitle={International Conference on Artificial Intelligence and Statistics},
  pages={388--396},
  year={2024},
  organization={PMLR}
}

@inproceedings{wang2020,
  title={Optimal algorithms for multiplayer multi-armed bandits},
  author={Wang, Po-An and Proutiere, Alexandre and Ariu, Kaito and Jedra, Yassir and Russo, Alessio},
  booktitle={International Conference on Artificial Intelligence and Statistics},
  pages={4120--4129},
  year={2020},
  organization={PMLR}
}

@inproceedings{buchbinder2024,
  title={Constrained submodular maximization via new bounds for dr-submodular functions},
  author={Buchbinder, Niv and Feldman, Moran},
  booktitle={Proceedings of the 56th annual ACM symposium on theory of computing},
  pages={1820--1831},
  year={2024}
}

@inproceedings{koutsoupias1999,
  title={Worst-case equilibria},
  author={Koutsoupias, Elias and Papadimitriou, Christos},
  booktitle={Annual symposium on theoretical aspects of computer science},
  pages={404--413},
  year={1999},
  organization={Springer}
}

@article{monderer1999,
  title={Potential games},
  author={Monderer, Dov and Shapley, Lloyd S},
  journal={Games and economic behavior},
  volume={14},
  number={1},
  pages={124--143},
  year={1996},
  publisher={Elsevier}
}

@article{liu2010,
  title={Distributed learning in multi-armed bandit with multiple players},
  author={Liu, Keqin and Zhao, Qing},
  journal={IEEE transactions on signal processing},
  volume={58},
  number={11},
  pages={5667--5681},
  year={2010},
  publisher={IEEE}
}

@book{garey1979,
  title={Computers and Intractability: A Guide to the Theory of NP-completeness},
  author={Garey, M.R. and Johnson, D.S.},
  series={Mathematical Sciences Series},
  year={1979},
  publisher={W. H. Freeman}
}

@article{anshelevich2008,
  title={The price of stability for network design with fair cost allocation},
  author={Anshelevich, Elliot and Dasgupta, Anirban and Kleinberg, Jon and Tardos, {\'E}va and Wexler, Tom and Roughgarden, Tim},
  journal={SIAM journal on computing},
  volume={38},
  number={4},
  pages={1602--1623},
  year={2008},
  publisher={SIAM}
}

\newpage
\appendix
\addcontentsline{toc}{section}{Appendix} 
\part{Appendix} 
\parttoc 

\crefalias{section}{appendix}

\section{Restricted monotonicity and submodularity}\label{annexe_preuve_sous_mod}

We now view $\mu_\theta$ as a set function on $[N]\times[K]$, with elements $(i,k)$ being the allocation of player $i$ to arm $k$. If $S$ and $T$ are two allocations, we say $S\subseteq T$ if $S_k\subseteq T_k$ for all $k$.

\begin{lem}\label{lem:restricted_monotone_submod}
Let $\mathcal U$ (resp. $\mathcal V$) be the allocations whose arms all have load at most $1$ (resp. $2$). Then $\mu_\theta$ is monotone on $\mathcal U$ and submodular on $\mathcal V$. Both of these properties break down if their respective conditions are lifted.
\end{lem}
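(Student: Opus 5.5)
Since $\mu_\theta(S)=\sum_k \theta_k f(S_k)$ with $\theta_k>0$, and both monotonicity and submodularity are statements about marginal gains of single pairs $(j,k)$, the plan is to reduce everything to the single-arm marginal formula \eqref{marginal}. Adding $(j,k)$ to an allocation only changes the $k$-th term, so it suffices to study, for a fixed arm, the map $A\mapsto \Delta_j(A):=f(A\cup\{j\})-f(A)=p_j\prod_{i\in A}(1-p_i)\bigl(1-\ell(A)\bigr)$, where $\ell(A)=\sum_{i\in A}\frac{p_i}{1-p_i}$ denotes the load.

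\emph{Monotonicity on $\mathcal U$.} Take $S\subseteq T$ in $\mathcal U$. I will add the pairs of $T\setminus S$ one at a time in any order. Every intermediate allocation $R$ satisfies $S\subseteq R\subseteq T$. Its loads are therefore bounded by those of $T$, so $R\in\mathcal U$. The pair $(j,k)$ that is added next still lies in $T$, so $\ell(R_k)+\frac{p_j}{1-p_j}\leq \ell(T_k)\leq 1$, which gives $\ell(R_k)<1$. By \eqref{marginal} each step then has gain $\theta_k\Delta_j(R_k)\geq 0$, and summing the steps gives $\mu_\theta(S)\leq\mu_\theta(T)$. Here monotonicity is read as holding for allocations that lie inside $\mathcal U$.

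\emph{Submodularity on $\mathcal V$.} Take $S\subseteq T$ in $\mathcal V$ and a pair $e=(j,k)\notin T$ such that $T\cup\{e\}$ is still an allocation. I need $\Delta_j(S_k)\geq\Delta_j(T_k)$. I will again pass from $S_k$ to $T_k$ one player $i$ at a time. Each step multiplies $\Delta_j$ by a known factor: going from $A$ to $A\cup\{i\}$ changes $\Delta_j$ by $-p_ip_j\prod_{l\in A}(1-p_l)\bigl(2-\ell(A)\bigr)$, which is the formula already displayed in Section~\ref{subsec:maximization_over_matroids}. Since every intermediate set satisfies $\ell(A)<\ell(A)+\frac{p_i}{1-p_i}\leq\ell(T_k)\leq 2$, this change is $\leq 0$. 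The chain therefore gives $\Delta_j(T_k)\leq\Delta_j(S_k)$. The main subtlety is the convention for $e$: I will require $T\cup\{e\}\in\mathcal V$, or at least $T\in\mathcal V$ only, and I need to check which one the argument uses. It uses only the loads of $A\subseteq T_k$, so $T\in\mathcal V$ alone suffices. The identity for the second difference also has to be verified; it follows from expanding $\Delta_j(A\cup\{i\})-\Delta_j(A)=p_j\prod_{A}(1-p_l)\bigl[(1-p_i)(1-\ell(A)-\tfrac{p_i}{1-p_i})-(1-\ell(A))\bigr]$, whose bracket equals $-p_i(2-\ell(A))$.

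\emph{Sharpness.} For monotonicity, I will take one arm holding a set $A$ of load $\ell(A)>1$, for instance two players with $p=0.6$. Adding a third player then gives a strictly negative gain by \eqref{marginal}. For submodularity, I will take $S_k=A$ with $2<\ell(A)$, for instance three players with $p=0.7$, and $T_k=A\cup\{i\}$. The second-difference formula above then gives $\Delta_j(T_k)-\Delta_j(S_k)>0$, which is exactly the violation stated in Section~\ref{subsec:maximization_over_matroids}. The only step that needs real care is the second-difference identity; the rest is bookkeeping.
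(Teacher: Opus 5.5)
Your proof is correct and follows the same route as the paper: monotonicity comes from adding the pairs of $T\setminus S$ one at a time and reading off the sign of \eqref{marginal}, and submodularity comes from the second-difference identity $-\theta_k p_ip_j\prod_{l\in A}(1-p_l)\bigl(2-\ell(A)\bigr)$ applied along a chain from $S_k$ to $T_k$, with the reversed signs giving the counterexamples. Your version is a bit more explicit than the paper's, since you spell out the chain and note that only $T\in\mathcal V$ is needed. One small wording fix: each step changes $\Delta_j$ by an additive increment, not by a multiplicative factor as you phrase it.
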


\begin{proof}
Both families are closed under subsets. 
For monotonicity, assume $S \subseteq T$. We add the players of $T\setminus S$ to $S$ sequentially. Each intermediate allocation lies in $\mathcal U$, so all arms have load at most $1$. As such, the marginal gain of each addition is positive, which concludes.

For $S_k\subseteq[N]$, adding player $j$ to $S_k$ changes the marginal gain of adding player $i$ by

\begin{align*}
    &\theta_k p_i \prod_{l \in S_k}(1-p_l)\Bigl[(1-p_j)\Bigl(1-\sum_{l \in S_k}\frac{p_l}{1-p_l}-\tfrac{p_j}{1-p_j}\Bigr)-(1-\sum_{l \in S_k}\frac{p_l}{1-p_l})\Bigr] \\&= -\theta_k p_i p_j \prod_{l \in S_k}(1-p_l)(2-\sum_{l \in S_k}\frac{p_l}{1-p_l}),
\end{align*}
which is negative when $\sum_{l \in S_k}\frac{p_l}{1-p_l}\leq 2$. As such, while loads are all below $2$, we are in a submodular regime.

Reciprocals follow the exact same reasoning: for monotonicity, an intermediate load greater than $1$ means a decrease in payout. For submodularity, the addition of a player past load $2$ increases the marginal of adding another player, losing submodularity.
\end{proof}

\section{Considerations in case of full saturation by greedy}\label{annexe_saturation_complete}
Let $p_1,\dots,p_N$ and $\theta_1,\dots,\theta_K$ be an instance of the allocation problem \eqref{allocation_problem} for which greedy saturates all arms, i.e., $\sum_{i\in S_{\text{Greedy},k}} \frac{p_i}{1-p_i} \geq 1$ for all $k \in [K]$.

\begin{lem}
In instances where greedy saturates all arms, $$\mu_\theta(S_{\text{Greedy}})\geq \frac{1}{e}\sum_{k\leq K}\theta_k$$
\end{lem}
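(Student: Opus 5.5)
The plan is to prove the bound arm by arm: I will show that every saturated arm of the greedy allocation satisfies $f(S_{\text{Greedy},k})\geq \frac1e$. Multiplying by $\theta_k>0$ and summing over $k\leq K$ then gives $\mu_\theta(S_{\text{Greedy}})\geq \frac1e\sum_{k}\theta_k$. The structural input is the rule of \Cref{alg:greedy}: a player is added to an arm only if that arm's marginal in \Cref{marginal} is strictly positive, i.e.\ only if its current load is strictly below $1$.

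Fix an arm $k$ and write $S=S_{\text{Greedy},k}$. It has load at least $1$, so it is nonempty. Let $j$ be the last player added to it, and let $S'=S\setminus\{j\}$ be the set just before that addition. Write $r_i=\frac{p_i}{1-p_i}$, $L'=\sum_{i\in S'}r_i$ and $P=\prod_{i\in S'}(1-p_i)$. By the greedy rule, $L'<1$. By saturation, $L'+r_j\geq 1$. Directly from the definition of $f$,
\begin{equation*}
f(S)=(1-p_j)\,P\,L' + p_j P = P\,\frac{L'+r_j}{1+r_j},
\end{equation*}
using $p_j=\frac{r_j}{1+r_j}$ and $1-p_j=\frac{1}{1+r_j}$.

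Next I lower bound the two factors separately. For $P$, write $1-p_i=\frac{1}{1+r_i}\geq e^{-r_i}$, which gives $P\geq e^{-L'}$. For the second factor, the map $r\mapsto \frac{L'+r}{1+r}$ is nondecreasing on $r\geq0$ because $L'<1$. Hence under the constraint $r_j\geq 1-L'$ it is at least its value at $r=1-L'$, namely $\frac{1}{2-L'}$. Combining,
\begin{equation*}
f(S)\geq \frac{e^{-L'}}{2-L'} \qquad \text{for some } L'\in[0,1).
\end{equation*}

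It remains to minimise $\phi(L')=-L'-\ln(2-L')$ over $[0,1)$. Its derivative is $-1+\frac{1}{2-L'}\leq 0$ on $[0,1]$, so $\phi$ is nonincreasing. The infimum is therefore the limit at $L'=1$, which gives $f(S)\geq e^{-1}$. The degenerate case $S'=\emptyset$ is included: then $L'=0$, $P=1$, and $f(S)=p_j\geq\frac12$.

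The main obstacle is that a saturated arm can have a load far above $1$, so the naive estimate $f\geq \ell e^{-\ell}$ in terms of the final load $\ell$ is useless. The step that resolves this is to decompose at the last inserted player. This uses the greedy invariant $L'<1$ and lets the overshoot $r_j$ enter only through the monotone factor $\frac{L'+r_j}{1+r_j}$.
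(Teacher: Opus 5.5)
Your proposal is correct and follows essentially the same route as the paper: split each saturated arm at its last greedy insertion, use the invariant that the load before that insertion is below $1$, reduce to total load exactly $1$ by monotonicity in the last player's probability, and bound the product via $\frac{1}{1+r}\geq e^{-r}$. The only cosmetic difference is your final one-variable minimisation of $e^{-L'}/(2-L')$. The paper instead applies $\frac{1}{1+r}\geq e^{-r}$ to the truncated last player as well, which amounts to $\frac{1}{2-L'}\geq e^{-(1-L')}$ and gives $e^{-1}$ directly.
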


\begin{proof}
The greedy \Cref{alg:greedy} assigns a player to an arm only if the marginal of said addition is positive. As such, for $k\in [K]$, the last player added to arm $k$, say player $j$, verifies that $\theta_k p_j \prod_{i\in S_{\text{Greedy},k}, i\neq j}(1-p_i) \left(1-\sum_{i\in S_{\text{Greedy},k}, i\neq j}\frac{p_i}{1-p_i}\right)>0$.

In particular, this means that $\sum_{i\in S_{\text{Greedy},k}, i\neq j}\frac{p_i}{1-p_i} < 1$. Let $\bar p_j$ be such that $\sum_{i\in S_{\text{Greedy},k}, i\neq j}\frac{p_i}{1-p_i} + \frac{\bar p_j}{1-\bar p_j} = 1$. We create a fictitious $(N+1)$-th player of activation probability $\bar p_j$. Since $\bar p_j \leq p_j$, and given the linearity of the marginal, this in particular means that $f(S_{\text{Greedy},k}) \geq f \bigl((S_{\text{Greedy},k} \setminus \{j\})\cup \{N+1\}\bigr)$

Finally, 
\begin{align*}
    f \bigl( (S_{\text{Greedy},k} \setminus \{j\}) \cup \{N+1\} \bigr) & = (1-\bar p_j)\displaystyle\prod_{i\in S_{\text{Greedy},k}, i\neq j}(1-p_i) \left( \sum_{i\in S_{\text{Greedy},k}, i\neq j}\frac{p_i}{1-p_i} + \frac{\bar p_j}{1-\bar p_j}\right)\\
    &\geq (1-\bar p_j)\displaystyle\prod_{i\in S_{\text{Greedy},k}, i\neq j}(1-p_i)=\frac{1}{1+\frac{\bar p_j}{1-\bar p_j}} \prod_{i\in S_{\text{Greedy},k}, i\neq j} \frac{1}{1+\frac{p_i}{1-p_i}}\\&
    \geq  e^{(-\sum_{i\in S_{\text{Greedy},k}, i\neq j}\frac{p_i}{1-p_i}) - \frac{\bar p_j}{1-\bar p_j}} = \frac{1}{e}
\end{align*}
The result finally comes from summing expected payouts for each arm.
\end{proof}

We now seek an upper bound on $\mu_{\theta}(S_{\text{opt}})$ to conclude.

\begin{lem}\label{lem:borne_sup_sachant_q_sature}
With $q<1$ the maximal activation probability of players, $\mu_{\theta}(S_{\text{opt}})\leq \lceil \frac{1-q}{q}\rceil\frac{q}{1-q}\cdot(1-q)^{\lceil \frac{1-q}{q}\rceil}\cdot\sum_{k \in [K]} \theta_k$
\end{lem}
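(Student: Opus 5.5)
The plan is to bound each arm separately, uniformly in $\theta$, and then sum. More precisely, I will show that for any $S_k\subseteq[N]$ with all $p_i\le q$,
\[
f(S_k)\le \max_{n\in\mathbb N} n\,r\,(1+r)^{-n} = \max_{n\in\mathbb N} n\,\tfrac{q}{1-q}(1-q)^n,
\qquad r:=\tfrac{q}{1-q}.
\]
I will then identify the maximizing integer as $n^\star=\lceil\frac{1-q}{q}\rceil$. Multiplying by $\theta_k$ and summing over $k\in[K]$ then gives the statement for $S_{\text{opt}}$, or indeed for any allocation. Empty arms contribute $0$ and are harmless.

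\textbf{Step 1: reparametrize by individual loads.} Write $r_i=\frac{p_i}{1-p_i}\in(0,r]$, so that $1-p_i=\frac{1}{1+r_i}$. Then
\[
f(S_k)=\Bigl(\sum_{i\in S_k} r_i\Bigr)\prod_{i\in S_k}\frac{1}{1+r_i}.
\]
Fix the load $l=\sum_i r_i$. Maximizing $f$ then amounts to minimizing $\sum_i\log(1+r_i)$ subject to $\sum_i r_i=l$ and $0\le r_i\le r$. Since $\log(1+\cdot)$ is concave, this sum is minimized at an extreme point of the feasible set. Concretely, I will use a pairwise exchange argument: replacing two non-saturated loads $r_i,r_j$ by $\min(r,r_i+r_j)$ and the remainder does not increase the sum, and splitting further only hurts. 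The extreme point is $m=\lfloor l/r\rfloor$ copies of $r$ plus one remainder $t=l-mr\in[0,r)$. Hence
\[
f(S_k)\le \frac{mr+t}{(1+r)^m(1+t)}.
\]

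\textbf{Step 2: eliminate the fractional part.} For fixed $m$, the derivative in $t$ of $\frac{mr+t}{1+t}$ has the sign of $1-mr$, so the bound is monotone in $t$ on $[0,r]$. If $mr<1$, the bound increases in $t$ and is at most its value at $t=r$, namely $(m+1)r(1+r)^{-(m+1)}$. Otherwise it is at most its value at $t=0$, namely $mr(1+r)^{-m}$. In both cases $f(S_k)\le \max_n h(n)$, where
\[
h(n):=n\,r\,(1+r)^{-n}=n\,\tfrac{q}{1-q}(1-q)^n .
\]

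\textbf{Step 3: optimize over the integer $n$.} The ratio $h(n+1)/h(n)=\frac{n+1}{n}(1-q)$ is at least $1$ exactly when $n\le\frac{1-q}{q}$. So $h$ increases up to $\lfloor\frac{1-q}{q}\rfloor+1$ and decreases afterwards. When $\frac{1-q}{q}\notin\mathbb N$ this integer equals $\lceil\frac{1-q}{q}\rceil$. When $\frac{1-q}{q}\in\mathbb N$, the two consecutive integers $\frac{1-q}{q}$ and $\frac{1-q}{q}+1$ tie, so $n^\star=\lceil\frac{1-q}{q}\rceil$ is again a maximizer.

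I expect the main point of care to be Step 1, the extreme-point/exchange argument under the box constraint $r_i\le r$. The remaining steps are one-variable calculus.
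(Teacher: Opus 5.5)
Your proof is correct, but it takes a longer route than the paper. The paper conditions on player $i$'s activity, $f(S)=(1-p_i)f(S\setminus\{i\})+p_i\prod_{j\neq i}(1-p_j)$, to see that $f$ is affine in each $p_i$ separately. Its maximum over the box $[0,q]^{|S|}$ is therefore attained at a vertex, where every player has probability $0$ or $q$. What remains is to maximize $nq(1-q)^{n-1}$ over $n$, and the maximizer $\lceil\frac{1-q}{q}\rceil$ is read off from the sign of the marginal gain. Multi-affinity removes all fractional coordinates in one stroke. Your concavity argument on $\sum_i\log(1+r_i)$ at fixed load leaves one fractional coordinate $t$, which you then need Step 2 to eliminate.

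In exchange, Steps 1--2 give you something the paper's argument does not: an explicit upper bound on $f(S_k)$ in terms of the arm's load $l$, namely $\frac{mr+t}{(1+r)^m(1+t)}$ with $m=\lfloor l/r\rfloor$. This bound is essentially attained by $m$ players of probability $q$ plus one of load $t$. Such a load-dependent bound could be useful wherever payouts must be controlled given loads, as in the price-of-anarchy analysis. Your Step 3 also handles the tie when $\frac{1-q}{q}\in\mathbb N$ explicitly, which the paper leaves implicit.
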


\begin{proof}
Let $S$ be a set of player indices, and $i\in S$. Conditioning on the activity of player $i$,
\begin{equation*}
    f(S) = (1-p_i)\,f(S\setminus\{i\}) + p_i\prod_{j\in S,\, j\neq i}(1-p_j),
\end{equation*}
so $f(S)$ is an affine function of $p_i$ when the other probabilities are fixed. Its maximum over $p_i\in[0,q]$ is therefore attained at $p_i=0$ or $p_i=q$. Replacing the activation probabilities one by one in this way never decreases $f(S)$, and leads to a set in which every player has activation probability $0$ or $q$.
In particular, an allocation where each arm gets the maximum amount until adding more players is detrimental, i.e., given \eqref{marginal}, exactly 
$\lceil \frac{1-q}{q}\rceil$ players, has higher expected payout than all allocations on $p_1,\dots,p_N$, including optimal ones.

The overall formula follows by adding per-arm payouts.
\end{proof}

Taking the quotient of the two bounds we just obtained finally gives that $$\frac{\mu_{\theta}(S_{\text{Greedy}})}{\mu_{\theta}(S_{\text{opt}})} \geq \frac{1}{\lceil \frac{1-q}{q}\rceil e\frac{q}{1-q}\cdot(1-q)^{\lceil \frac{1-q}{q}\rceil}}$$

This ratio is always at least $e^{-q}$. Indeed, writing $n = \lceil \frac{1-q}{q}\rceil \geq 1$ and using $1-q\leq e^{-q}$,

\begin{align*}
    n\,\frac{q}{1-q}\,(1-q)^{n} = nq\,(1-q)^{n-1} &\leq nq\,e^{-q(n-1)} \\&= e^{q}\cdot nq\,e^{-nq} \\&\leq e^{q-1}
\end{align*}

\section{Per-arm payout inequality for fixed mass}\label{annexe_per-arm_loads}
We wish to formalize that for a fixed arm mass $m = \sum_i p_i$, the expected payout is lowest if all players have infinitesimal activation probability when $m\leq 1$, and is largest when the mass is concentrated on few players of maximal activation probability, yielding closed-form lower and upper bounds for an arm's expected payout.

Throughout this section, for a set $S$ of players, we write
$$f(S) = \sum_{i\in S} p_i \prod_{j \in S, j\neq i}(1-p_j),$$
which coincides with the definition of \Cref{sec:preliminaries} and remains well defined if some player has activation probability $1$.

\begin{lem}
\label{borne_par_bras_masse}
Let $q <1 $ be a fixed real. For every integer $n \geq 1$ and every $n$-tuple $(p_1, \dots, p_n)$ of reals in $[0, q]$ of mass $m = \sum_{i=1}^n p_i$,
we have, with $S = [n]$,
$$f(S) \leq m e^{q-m}$$
and, if furthermore $m \leq 1$,
$$f(S) \geq m e^{-m}$$
\end{lem}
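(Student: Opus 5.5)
The upper bound follows from a direct exponential estimate. The lower bound follows from a splitting argument: we show that replacing a player by two half-players never increases $f$, and then pass to the Poisson limit.

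\emph{Upper bound.} Write $f(S)=\sum_{i} p_i\prod_{j\neq i}(1-p_j)$. Using $1-x\leq e^{-x}$, each product is at most $e^{-(m-p_i)}$. Since $p_i\leq q$, this is at most $e^{q-m}$. Summing $p_i e^{q-m}$ over $i$ gives $f(S)\leq m e^{q-m}$. This part needs no further work, and it remains valid if some $p_i$ equals $1$.

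\emph{Lower bound, local comparison.} Assume $m\leq 1$. Fix a player of probability $p=a+b$ and let $R$ be the remaining players. Let $P_0$ and $P_1$ be the probabilities that exactly zero, respectively exactly one, player of $R$ is active.
\begin{itemize}
\item If the player is kept whole, conditioning on its activity gives $f=P_0(a+b)+P_1(1-a-b)$.
\item If it is split into two independent players of probabilities $a$ and $b$, then $f=P_0(a+b-2ab)+P_1(1-a)(1-b)$.
\end{itemize}
The difference (merged minus split) is $ab\,(2P_0-P_1)$. Moreover $P_1/P_0=\sum_{i\in R}\frac{p_i}{1-p_i}$ is the load of $R$, whenever $P_0>0$. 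Hence splitting does not increase $f$ as soon as the load of $R$ is at most $2$.

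\emph{Choosing what to split.} I would always split the \emph{currently largest} player, of probability $p$, into two halves. Then every $i\in R$ satisfies $p_i\leq p<1$, so $\frac{p_i}{1-p_i}\leq\frac{p_i}{1-p}$. The mass of $R$ is $m-p\leq 1-p$, so the load of $R$ is at most $\frac{m-p}{1-p}\leq 1\leq 2$. Each step therefore does not increase $f$, and the total mass $m$ is preserved.

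Iterating this step, the maximal activation probability tends to $0$. For any configuration of mass $m$ with maximal probability $\delta$, we have
\[
f=\prod_j(1-p_j)\cdot\sum_i\frac{p_i}{1-p_i}.
\]
The product lies between $e^{-m/(1-\delta)}$ and $e^{-m}$, and the load lies between $m$ and $m/(1-\delta)$. Hence $f\to m e^{-m}$ as $\delta\to 0$. Since $f(S)$ is at least every term of this non-increasing sequence, we obtain $f(S)\geq m e^{-m}$.

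The main obstacle is the choice of splitting order. An arbitrary split could have a near-$1$ player in $R$, making the load of $R$ exceed $2$. Splitting the largest player first is exactly what makes the load bound $\frac{m-p}{1-p}\leq 1$ available. A minor point is the degenerate case $P_0=0$, which cannot occur here since all $p_i\leq q<1$.
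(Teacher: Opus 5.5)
Your proof is correct, and your upper bound is exactly the paper's argument. For the lower bound you take a genuinely different route.

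The paper's lower bound argument is a single finite step. It writes $f(S)/m=\sum_i \frac{p_i}{m}\prod_{j\neq i}(1-p_j)$ as a convex combination and applies weighted AM--GM to get $f(S)/m\geq\prod_j(1-p_j)^{1-p_j/m}$. It then uses $m\leq 1$ to conclude via $(1-\frac{p_j}{m})\ln(1-p_j)\geq(1-p_j)\ln(1-p_j)\geq -p_j$.

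You instead prove a structural monotonicity: splitting a player never increases $f$ when the load of the others is at most $2$. You then pass to the Poisson limit. The paper's argument is shorter and needs no limiting process. Yours makes rigorous the paper's informal claim that many infinitesimal players are the worst configuration of a given mass. It also gives slightly more. Since the player you split always has $p\leq q$, your bound $\frac{m-p}{1-p}\leq 2$ holds as soon as $m\leq 2-q$, so the same argument yields $f(S)\geq me^{-m}$ on that larger range.

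One step you should justify explicitly is that the maximal probability tends to $0$ under split-the-largest. This is easy. Any player of probability at least $\delta$ that ever appears comes from an original player by at most $\log_2(q/\delta)$ halvings, so only finitely many such players are ever created. Each step with maximum at least $\delta$ consumes one of them.

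At the end you only need the one-sided estimate $f\geq me^{-m/(1-\delta)}$, not full convergence to $me^{-m}$.
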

\begin{proof}
For each $i$, since $1-p_j \leq e^{-p_j}$ and $p_i\leq q$,
\begin{align*}
    \prod_{j\in S, j\neq i}(1-p_j) \leq e^{-(m-p_i)} \leq e^{q-m}
\end{align*}
Summing over $i$ with weights $p_i$ gives the upper bound $f(S) \leq me^{q-m}$.

For the lower bound, assume $0 < m\leq 1$. The $\frac{p_i}{m}$ sum to $1$, so the weighted arithmetic-geometric inequality gives
\begin{align*}
    \frac{f(S)}{m} = \sum_{i=1}^n \frac{p_i}{m}\prod_{j\neq i}(1-p_j) & \geq \prod_{i=1}^n \Bigl[\prod_{j\neq i}(1-p_j)\Bigr]^{\frac{p_i}{m}} \\
    & = \prod_{j=1}^n (1-p_j)^{1-\frac{p_j}{m}}.
\end{align*}
Taking logarithms, it remains to show that
$$\sum_{j=1}^n \Bigl(1-\frac{p_j}{m}\Bigr)\ln(1-p_j) \geq -m = -\sum_{j=1}^n p_j$$

Since $m\leq 1$, we have $1-\frac{p_j}{m}\leq 1-p_j$, and since $\ln(1-p_j)\leq 0$,
\begin{align*}
    \Bigl(1-\frac{p_j}{m}\Bigr)\ln(1-p_j) & \geq (1-p_j)\ln(1-p_j) \\ & \geq -p_j
\end{align*}

We conclude by summing all terms.
\end{proof}

\section{Existence of an allocation matching optimal masses}\label{annexe_load_matching}

\begin{lem}\label{greedy_degradation_to_match_opt}
Let $(c_1,\dots,c_K)\in \mathbb R_+^K$, arm means $\theta_1, \dots, \theta_K$ and players $p_1,\dots,p_N$ such that $$\sum _{k\leq K}c_k \leq \sum_{i\leq N} p_i$$
and let $S_{\text{Greedy}}$ be the allocation given by the greedy \Cref{alg:greedy} on some order of players. Assume that greedy assigns every player to an arm.
There exists an allocation $\bar S$, on a set of players potentially different from the initial one, where the activation mass on arm $k$ is exactly $c_k$, and that satisfies $\bar\mu_\theta(\bar S)\leq \mu_\theta(S_{\text{Greedy}})$, with $\bar\mu_\theta$ the payout function for the set of players used in $\bar S$.
\end{lem}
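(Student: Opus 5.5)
The plan is to transform $S_{\text{Greedy}}$ into $\bar S$ through a sequence of elementary operations on the players, each of which does not increase the payout, until arm $k$ carries mass exactly $c_k$. Two operations will be used: \emph{splitting} a player of activation probability $p$ into two fictitious players of probabilities $a,b$ with $a+b=p$, and \emph{removing} (part of) a player. For splitting, I would use the affine decomposition from the proof of \Cref{lem:borne_sup_sachant_q_sature}. If $P=\prod_{j\neq i}(1-p_j)$ and $F=f(S\setminus\{i\})=P\,\ell$, where $\ell$ is the load of the other players, then replacing $p_i=p$ by $a,b$ changes $f$ by
\[
ab\,(F-2P)=ab\,P(\ell-2).
\]
This is nonpositive whenever $\ell\le 2$. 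To stay in that regime, I would apply the splitting only to players whose remaining arm-mates have load below $1$. The greedy rule guarantees this for the last player added to each arm, since its marginal was positive, exactly as in \Cref{annexe_saturation_complete}. Iterating on the current last player, each arm's mass can therefore be made arbitrarily finely divisible ``from the top'' without increasing $f$.

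The second step handles arms with excess mass, i.e.\ greedy mass $m_k^{G}>c_k$. On such an arm I would remove the top (finely split) sub-players until the mass equals $c_k$. Each removed sub-player was, at the moment of removal, the last one on its arm with positive marginal by \eqref{marginal}, so removal decreases the payout. Because $\sum_k c_k\le\sum_i p_i=\sum_k m_k^G$ (every player is assigned), the mass removed from excess arms is at least the total deficit on arms with $m_k^G<c_k$.

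The third step is to raise the deficit arms to $c_k$ by reinserting part of that removed mass, and this is where I expect the main obstacle. Adding mass to an arm generally \emph{increases} its payout, so I cannot just add. The argument must instead be an exchange: moving an infinitesimal mass $\delta$ from the top of an excess arm $k$ to the top of a deficit arm $k'$ should cost at least as much as it gains. This is where greedy optimality enters. When the greedy algorithm placed that top mass on $k$, its rate $\theta_k\prod(1-p_i)(1-\ell_k)$ was at least the rate $\theta_{k'}\prod(1-p_i)(1-\ell_{k'})$ of $k'$ at that time. I would need the marginal rate of $k'$ at the current state of the construction to be no larger than at that time. On arms of load at most $1$, the marginal rate per unit mass is decreasing in the arm's content, by the submodularity regime of \Cref{lem:restricted_monotone_submod}. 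So it suffices that $k'$ never has less content in the construction than it had when greedy placed the mass in question. To guarantee this, I would process the transfers in reverse chronological order of greedy's insertions, undoing greedy's late decisions first. I would also perform the transfers using fictitious players of infinitesimal probability, so that rates are exactly the derivatives given by \eqref{marginal}.

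Finally, any leftover removed mass is simply discarded, which the second step already shows is payout-decreasing. Concatenating all operations yields $\bar S$ with masses exactly $c_k$ and $\bar\mu_\theta(\bar S)\le\mu_\theta(S_{\text{Greedy}})$. The delicate verification is the monotonicity of marginal rates along the reverse-order transfers; if that fails on saturated arms, I would first truncate every $c_k$ at the greedy saturation level, using that $c_k\le 1$ in the intended application.
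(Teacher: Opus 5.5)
There is a genuine gap, and it comes from the splitting device. Write $z(T)=\prod_{j\in T}(1-p_j)$ and, with $\ell(T)$ the load of $T$, $\Delta(T)=z(T)-f(T)=z(T)\bigl(1-\ell(T)\bigr)$. Then a player of mass $b$ added to arm $k$ contributes exactly $b\,\theta_k\Delta(S_k)$.

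Greedy put the last player $i$ of an excess arm $k$ there because $\theta_k\Delta(R)\geq\theta_{k'}\Delta(S^{(i)}_{k'})$. Here $R$ and $S^{(i)}_{k'}$ are the contents of arms $k$ and $k'$ when $i$ arrived. Once you split $i$ into fine pieces, the top piece has arm-mates $R$ plus the other pieces. Removing it therefore costs only $\theta_k\Delta(R\cup\{\text{other pieces}\})$ per unit. This can be far below $\theta_k\Delta(R)$, so greedy's inequality no longer controls your exchange.

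Concretely, take $K=2$, $\theta_1=1$, $\theta_2=0.99$, one player with $p=0.9$ (greedy puts it on arm $1$), and $c_1=c_2=0.45$. After fine splitting, moving top mass to arm $2$ loses about $e^{-0.9}\cdot 0.1\approx 0.04$ per unit and gains $0.99$ per unit. So the exchange in your third step increases the payout, and the stepwise invariant breaks.

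Similarly, when $\ell(R)+p>1$, your restriction to arm-mates of load below $1$ cannot be maintained down to arbitrarily fine pieces. If you split finely anyway, the top pieces have negative marginal, and the positivity claim of your second step fails.

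The missing idea is that no splitting or infinitesimals are needed. By the affine dependence you already noted, $f(R\cup\{i\})=f(R)+p_i\Delta(R)$. So lowering $p_i$ by $b$ costs exactly $b\,\theta_k\Delta(R)$, the very rate greedy compared, with $\Delta(R)>0$. Likewise, a fresh player of mass $b$ on a deficit arm $k'$ gains exactly $b\,\theta_{k'}\Delta(S_{k'})$.

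You then need $\Delta(S\cup\{j\})=(1-p_j)\Delta(S)-p_jz(S)\leq\max\{\Delta(S),0\}$, which holds at every load, not only in the submodular regime. Hence $\Delta(T)\leq\max\{\Delta(S),0\}$ whenever $S\subseteq T$. Since deficit arms only ever gain players, $S^{(i)}_{k'}\subseteq S_{\text{Greedy},k'}\subseteq S_{k'}$ holds automatically. This gives $\theta_{k'}\Delta(S_{k'})\leq\max\{\theta_{k'}\Delta(S^{(i)}_{k'}),0\}\leq\theta_k\Delta(R)$. You need no ordering constraint across arms (the reverse-chronological processing is superfluous) and no special treatment of saturated deficit arms.

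Each finite transfer of $b=\min\{p,\,m_k-c_k,\,c_{k'}-m_{k'}\}$, or plain removal once no deficit remains, is then payout-nonincreasing. Your fallback of truncating $c_k$ at greedy's saturation level would prove a weaker statement than the one required, which asks for exact masses $c_k$.
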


We approach the proof by showing we can shrink players and potentially reassign the removed mass to other arms, while reducing the expected payout.

For a set $S$ of players, we denote $z(S) = \prod_{i\in S}(1-p_i)$ and
$$\Delta(S) = z(S) - f(S)$$
Notice that $p\theta_k \Delta(S_k)$ is exactly  the marginal of adding a player of mass $p$ to arm $k$, as specified in \eqref{marginal}.
\begin{lem}\label{indice}
Let $S$ be a set of players and $p<1$ the activation probability of an additional player of index $i$. Then
\begin{align*}
    f(S\cup\{i\}) & = f(S) + p\Delta(S) \\
    \Delta(S\cup\{i\}) & = (1-p)\Delta(S) - pz(S) \leq \max\{\Delta(S), 0\}.
\end{align*}
In particular, if $S\subseteq T$, then $\Delta(T)\leq \max\{\Delta(S),0\}$.
\end{lem}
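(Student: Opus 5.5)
The plan is to verify the two identities by direct computation from the definitions $z(S)=\prod_{i\in S}(1-p_i)$ and $f(S)=\sum_{i\in S}p_i\prod_{j\in S,j\neq i}(1-p_j)$. Then we deduce the inequality by a short case analysis, and the monotonicity statement by induction on $|T\setminus S|$.

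For the first identity, condition on the activity of the new player $i$, exactly as in the proof of \Cref{lem:borne_sup_sachant_q_sature}. Exactly one player of $S\cup\{i\}$ is active if and only if either $i$ is inactive and exactly one player of $S$ is active, or $i$ is active and none of $S$ is. This gives
\[
f(S\cup\{i\})=(1-p)f(S)+p\,z(S)=f(S)+p\bigl(z(S)-f(S)\bigr)=f(S)+p\Delta(S).
\]
For the second identity, use $z(S\cup\{i\})=(1-p)z(S)$ together with the line above:
\[
\Delta(S\cup\{i\})=(1-p)z(S)-(1-p)f(S)-p\,z(S)=(1-p)\Delta(S)-p\,z(S).
\]

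For the inequality $\Delta(S\cup\{i\})\leq\max\{\Delta(S),0\}$, observe that $z(S)\geq 0$ and $p\in[0,1)$, so $-p\,z(S)\leq 0$. If $\Delta(S)\geq 0$, then $(1-p)\Delta(S)\leq\Delta(S)$, and the bound is $\Delta(S)$. If $\Delta(S)<0$, then $(1-p)\Delta(S)\leq 0$, and the bound is $0$. In both cases $\Delta(S\cup\{i\})\leq\max\{\Delta(S),0\}$.

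For the final claim, add the players of $T\setminus S$ one at a time, producing a chain $S=U_0\subset U_1\subset\dots\subset U_r=T$. Set $M_t=\max\{\Delta(U_t),0\}$. By the inequality just shown, $\Delta(U_{t+1})\leq M_t$. Since also $0\leq M_t$, we get $M_{t+1}\leq M_t$. Hence $\Delta(T)\leq M_{r-1}\leq\dots\leq M_0=\max\{\Delta(S),0\}$.

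There is no real obstacle. The only point requiring care is the sign bookkeeping in the case split, since the $\max$ with $0$ is what makes the one-step bound stable under iteration.
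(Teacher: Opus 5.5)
Your proof is correct and follows the paper's argument essentially step for step. The first identity comes from conditioning on the new player's activity, the second from direct computation using $z(S\cup\{i\})=(1-p)z(S)$, and the final claim from adding the players of $T\setminus S$ one at a time; your explicit case split and the sequence $M_t$ spell out what the paper leaves implicit, and need only $p\,z(S)\geq 0$ rather than the paper's asserted strict positivity.
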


\begin{proof}
By conditioning on the activity of the additional player, $f(S\cup\{i\}) = (1-p)f(S) + p\,z(S)$.

Since $z(S\cup\{i\}) = (1-p)z(S)$, we then have
\begin{align*}
\Delta(S\cup\{i\}) &= (1-p)z(S) - (1-p)f(S) - p\,z(S) \\&= (1-p)\Delta(S) - p\,z(S)
\end{align*}
Since $p\,z(S)$ is always strictly positive, we conclude.

The last point is obtained by adding the players of $T\setminus S$ one by one.
\end{proof}

\begin{lem}\label{sequential_degradation}
Let $S_{\text{Greedy}}$ be an allocation obtained by the greedy algorithm on an arbitrary order of the players. We assume that we know the order in which players on a given arm have been allocated. Let $A, B$ be a partition of $[K]$. Let $S$ be an allocation such that
\begin{itemize}
    \item for $k\in A$, $S_k$ is made of the first players assigned to arm $k$ by greedy, the last of which may have a lower activation probability than initially;
    \item for $k'\in B$, $S_{k'}$ contains $S_{\text{Greedy},k'}$.
\end{itemize}

Let $k\in A$, let $p$ be the activation probability in $S$ of the last player of $S_k$, and let $0<b\leq p$. Then the following two actions have the effect of reducing $\mu_\theta(S)$:
\begin{itemize}
    \item Lowering the activation probability of the last player of $S_k$ from $p$ to $p-b$
    \item Lowering the activation probability of the last player of $S_k$ from $p$ to $p-b$, and adding a new player of activation probability $b$ to an arm $k'\in B$
\end{itemize}
Furthermore, the resulting allocation still satisfies the two conditions above. In particular, it is possible to iterate these operations in any order.
\end{lem}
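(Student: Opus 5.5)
The plan is to exploit that $f$ is affine in the activation probability of any single player, and to compare the resulting marginal rates with the choices greedy actually made. Fix $k\in A$. Let $j$ be the last player of $S_k$, with current activation probability $p$. Let $R = S_k\setminus\{j\}$. By the first hypothesis, $R$ consists of the players greedy assigned to arm $k$ before $j$, all with their original activation probabilities. By \Cref{indice}, $f(S_k) = f(R) + p\,\Delta(R)$. Lowering $p$ to $p-b$ therefore changes $\mu_\theta(S)$ by exactly $-b\,\theta_k\Delta(R)$. Since $R$ is precisely the content of arm $k$ in greedy's run at the moment player $j$ was processed, and greedy assigned $j$ to $k$, the test in \Cref{alg:greedy} gives $\theta_k\Delta(R) > 0$, because $p_j\theta_k\Delta(R)$ is the marginal \eqref{marginal}. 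This settles the first action.

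For the second action, the extra effect is adding a player of probability $b$ to arm $k'\in B$. By \Cref{indice}, this changes the payout by $+b\,\theta_{k'}\Delta(S_{k'})$. The net change is therefore $b\bigl(\theta_{k'}\Delta(S_{k'}) - \theta_k\Delta(R)\bigr)$, and it suffices to show $\theta_{k'}\Delta(S_{k'}) \le \theta_k\Delta(R)$.

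Let $T$ be the content of arm $k'$ in greedy's run at the time player $j$ was processed. Then $T\subseteq S_{\text{Greedy},k'}\subseteq S_{k'}$ by the second hypothesis, with the players of $T$ at their original probabilities in $S_{k'}$. Any extra players in $S_{k'}$ (original greedy players or added fictitious ones) only enlarge the set. The monotonicity part of \Cref{indice} then gives $\Delta(S_{k'})\le\max\{\Delta(T),0\}$. Greedy chose $k$ over $k'$ for player $j$, so $\theta_k\Delta(R)\ge\theta_{k'}\Delta(T)$. Together with $\theta_k\Delta(R)>0$ and $\theta_{k'}>0$, this yields
\[
\theta_{k'}\Delta(S_{k'})\le \theta_{k'}\max\{\Delta(T),0\}\le \theta_k\Delta(R).
\]
This comparison is the main point to check carefully. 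One must make sure that the greedy state $T$ is genuinely contained in the current $S_{k'}$, and that fictitious players added earlier never hurt, which is exactly what the one-sided bound $\Delta(T\cup U)\le\max\{\Delta(T),0\}$ provides.

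Finally, I will verify that the invariants are preserved, so that the operations can be iterated. For $k\in A$, the arm still consists of a greedy prefix whose last player has a possibly lowered probability. If $p-b=0$, the player $j$ becomes inert and can be dropped. The new last player is then the previous greedy player of arm $k$ at its original probability, which still fits the first condition. If $R=\emptyset$, arm $k$ is simply empty, which is a trivial prefix. For $k'\in B$, adding a player keeps $S_{k'}\supseteq S_{\text{Greedy},k'}$. Arms other than $k$ and $k'$ are untouched. Hence both conditions hold after either operation, and the argument applies again to any arm in $A$, in any order.
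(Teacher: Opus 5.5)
Your proposal is correct and matches the paper's proof essentially step for step. It uses the same three ingredients: the exact change $-b\,\theta_k\Delta(R)$ obtained from \Cref{indice}; positivity and dominance of $\theta_k\Delta(R)$, which come from greedy's choice when the last player was processed; and the bound $\Delta(S_{k'})\le\max\{\Delta(T),0\}$, which follows from the inclusion $T\subseteq S_{\text{Greedy},k'}\subseteq S_{k'}$.
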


\begin{proof}
Let $i$ be the last player of $S_k$, and $R$ the players of $S_k$ other than $i$. By assumption, $R$ is exactly the set of players assigned to arm $k$ at the moment greedy processed player $i$. Since greedy assigned $i$ to arm $k$, this means that
$$\theta_k \Delta(R) = \max_{\ell\in[K]}\theta_{\ell} \Delta(S^{(i)}_{\ell}) > 0,$$
where $S^{(i)}_{\ell}$ denotes the set of players assigned to arm $\ell$ at that moment.

By \Cref{indice}, $f(R\cup\{i\}) = f(R) + p\,\Delta(R)$, so lowering the activation probability of player $i$ by $b$ adds exactly $-b\theta_k \Delta(R) \leq 0$ to $\mu_\theta(S)$.

We now wish to show that reassigning the removed mass to an arm in $B$ keeps the effect on $\mu_\theta$ negative. Adding a player of activation probability $b$ to arm $k'$ adds $b\theta_{k'}\Delta(S_{k'})$ to $\mu_\theta(S)$,  by \Cref{indice}. Arm $k'$ may have received other players since greedy processed player $i$, but lost none, meaning $S^{(i)}_{k'}\subseteq S_{\text{Greedy},k'}\subseteq S_{k'}$ and, by \Cref{indice},
$$\theta_{k'}\Delta(S_{k'}) \leq \max\{\theta_{k'}\Delta(S^{(i)}_{k'}), 0\} \leq \theta_k \Delta(R) $$

The second action therefore also reduces expected payout.

Finally, arm $k$ still contains the first players assigned to it by greedy, with the last one shrunk or removed, and arm $k'$ only gained a player, so both conditions are preserved.
\end{proof}

\textbf{Proof of \Cref{greedy_degradation_to_match_opt}.}
We are now in a position to prove the main result initially mentioned. Let us start with greedy's allocation, which populates each arm with mass $m_k$, and iterate operations from the above lemma. Let $A$ be the set of arms where $m_k > c_k$, and $B$ its complement. The conditions of \Cref{sequential_degradation} are satisfied by $S_{\text{Greedy}}$ itself.

Since greedy assigns every player, $\sum_k m_k = \sum_i p_i \geq \sum_k c_k$.

While there exists an arm $k$ in $A$ such that $m_k>c_k$, we consider its last player, of activation probability $p$:
\begin{itemize}
    \item if there exists an arm $k'$ in $B$ such that $c_{k'} > m_{k'}$, we move a mass $b = \min\{p,\ m_k - c_k,\ c_{k'}-m_{k'}\}$ from the last player of arm $k$ to a new player on arm $k'$;
    \item otherwise, we remove a mass $b = \min\{p,\ m_k-c_k\}$ from the last player of arm $k$.
\end{itemize}
The masses of the arms in $A$ stay above their associated target, and those of the arms in $B$ stay below.

Each of these ``rounds'' either removes a player entirely from an arm in $A$, or makes an arm exactly match its target mass. As such, there are at most $N+2K$ rounds. Moving mass lowers the excess and the missing mass by the same amount, and mass is only removed when no mass is missing, so the excess mass remains at least the missing mass throughout. When the process stops, there is no excess mass left, hence no missing mass either. We therefore reach the wanted state, all while lowering $\mu_\theta$ in the process.

\section{Calculations giving the lower bound on $\frac{\mu_{\theta}(S_{\text{Greedy}})}{\mu_{\theta}(S_{\text{opt}})}$ }\label{annexe_fin_preuve1/e}
Throughout this section, we assume that the greedy \Cref{alg:greedy} assigns every player to an arm, and denote $q = \max_i p_i$.

Let $m_k^{\text{opt}} = \sum_{i\in S_{\text{opt},k}} p_i$ be the per-arm masses of an optimal allocation, and $(c_1,\dots,c_K) = (\min\{m_1^{\text{opt}},1\},\dots,\min\{m_K^{\text{opt}},1\})$. Since $$\sum_{k\leq K} c_k \leq \sum_{k\leq K} m_k^{\text{opt}} \leq \sum_{i\leq N} p_i,$$ we are in the scope of \Cref{greedy_degradation_to_match_opt}, and therefore have access to an allocation $\bar S$ which shares its masses with an optimal allocation on arms of mass at most $1$, has mass exactly $1$ on the others, and has worse payout than $S_{\text{Greedy}}$.

The lower bound of \Cref{borne_par_bras_masse} gives
$$f(\bar{S}_k) \geq c_k e^{-c_k}$$
while its upper bound, applied to the optimal allocation, gives
$$f(S_{\text{opt},k}) \leq m_k^{\text{opt}} e^{q-m_k^{\text{opt}}}$$

It remains to compare the two. If $m_k^{\text{opt}}\leq 1$, then $c_k = m_k^{\text{opt}}$ and $m_k^{\text{opt}} e^{-m_k^{\text{opt}}} = c_ke^{-c_k}$. Otherwise, $c_k = 1$, and since $t\mapsto te^{-t}$ is upper-bounded by $e^{-1}$ on $\mathbb R_+$, we have $m_k^{\text{opt}} e^{-m_k^{\text{opt}}} \leq e^{-1} = c_ke^{-c_k}$. In both cases,
$$f(S_{\text{opt},k}) \leq e^{q} c_k e^{-c_k} \leq e^{q} f(\bar S_k).$$

Summing across all arms, we finally have
\begin{align*}
    \mu_{\theta}(S_{\text{Greedy}}) \geq \bar\mu_{\theta}(\bar{S}) & = \sum_{k\in[K]}\theta_k f(\bar S_k) \\
    & \geq e^{-q}\sum_{k\in[K]}\theta_k f(S_{\text{opt},k}) \\
    & = e^{-q}\mu_{\theta}(S_{\text{opt}}),
\end{align*}

\section{Proof of tightness of the $\frac{1}{e}$ ratio}\label{annexe_borne_tight}
Given the instance described in the proof sketch, i.e., $\theta_1 = 1$, $\theta_2 = \frac{1}{3N}$, $p_1=\dots=p_{N-1} = \frac{1}{N}$ and $p_N = 1 - \frac{1}{N}$ for some $N\in \mathbb{N}_{\geq 3}$, let us establish what optimal and greedy allocations are:
\begin{itemize}
    \item Optimal allocations assign player $N$ to arm $1$ and the rest to arm $2$, yielding a total expected payout of $1-\frac{1}{N}+\frac{1\cdot(1-\frac{1}{N})^{N-1}}{3N}$.

    \item Greedy processes players in order. The first player is assigned to the better of the two arms, arm $1$. When $k < N-1$ players of activation probability $\frac{1}{N}$ are assigned to arm $1$, the marginal of adding a player of activation probability $p$ is $p(1-\frac{1}{N})^k \frac{N-1-k}{N-1} \geq \frac{p}{3N}$, with the latter being the marginal of adding that same player to the empty arm $2$. As such, all of the $N-1$ first players get assigned to arm $1$. Finally, when $k = N-1$, the marginal of adding a player to arm $1$ becomes $0$, and player $N$ of activation probability $1 - \frac{1}{N}$ gets assigned to arm $2$. The final expected payout is $(1-\frac{1}{N})^{N-1} + \frac{1-\frac{1}{N}}{3N}$
\end{itemize}
Taking the ratio of the two payouts gives $$\frac{\mu_{\theta}(S_{\text{Greedy}})}{\mu_{\theta}(S_{\text{opt}})} = \frac{(1-\frac{1}{N})^{N-1} + \frac{1-\frac{1}{N}}{3N}}{1-\frac{1}{N}+\frac{(1-\frac{1}{N})^{N-1}}{3N}} \xrightarrow[N\rightarrow \infty]{}\frac{1}{e}$$

This concludes.
 \section{Bounds on DORG}\label{annexe_dorg}
We first establish that DORG coincides with at least one optimal allocation on all players with activation probability above $\frac{1}{2}$.
\begin{lem}
Let $\kappa = \max\{i\mid p_i>\frac12\}$, with $\kappa = 0$ if the set is empty. Then, there is an optimal allocation that coincides with DORG for the first $\kappa$ players.
\end{lem}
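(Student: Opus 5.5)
The plan is to first pin down the structure that any optimal allocation imposes on the players with $p_i>\frac12$, and then show that DORG reproduces exactly this structure. The elementary fact driving everything is that a player with $p>\frac12$ has $\frac{p}{1-p}>1$, so an arm containing such a player is already saturated. By \eqref{marginal}, adding any further player to it has a strictly negative marginal gain. For a player $j$ with $p_j>\frac12$ sharing an arm with a set $R\neq\emptyset$, the identity $f(R\cup\{j\}) = (1-p_j)f(R)+p_j z(R)$ from \Cref{indice} gives two bounds. First, removing $R$ and leaving $j$ alone yields $p_j \geq f(R\cup\{j\})$. Second, removing $j$ yields $f(R)$, which can be compared directly.

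\textbf{Step 1 (structure of an optimal allocation).} I will show there is an optimal allocation in which each of the players $1,\dots,\kappa$ is alone on its arm, and these arms are the arms $1,\dots,\min(\kappa,K)$ in order, i.e.\ player $i$ sits on arm $i$. Some of the players may instead be unassigned when $\kappa>K$.

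\emph{Isolation.} Take an optimal $S$. If a large player $j$ shares arm $k$ with $R$, then $f(R\cup\{j\})\leq \max(f(R),p_j)$, since $f(R\cup\{j\})$ is a convex combination of $f(R)$ and $z(R)$, and $z(R)\leq\ldots$. More carefully, I would argue that removing either $j$ or $R$ from arm $k$ does not decrease $f$. Removing $R$ leaves $f=p_j$, and $p_j-f(R\cup\{j\}) = p_j(1-z(R)) - (1-p_j)f(R)$. It is nonnegative when $f(R)\leq \frac{p_j}{1-p_j}(1-z(R))$. Since $f(R)\leq 1-z(R)$ and $\frac{p_j}{1-p_j}>1$, this holds. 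Removed players become unassigned, so optimality is preserved.

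\emph{Sorting.} An exchange argument then moves large players to the best arms. Suppose $\theta_k>\theta_{k'}$ with $p_i>p_{i'}$, where $i$ is alone on $k'$ and $i'$ is alone on $k$ (or $k$ is empty or holds small players). Swapping does not decrease the value, using rearrangement: $\theta_k p_i+\theta_{k'}p_{i'}\geq \theta_k p_{i'}+\theta_{k'}p_i$. When arm $k$ holds a set $R$ of small players, I will use $f(R)\leq 1-z(R)$ and $p_i>\frac12$ to argue that putting $i$ on the better arm and $R$ on the worse arm is at least as good. This is where I expect the \textbf{main obstacle}: comparing $\theta_k f(R)+\theta_{k'}p_i$ with $\theta_k p_i+\theta_{k'} f(R)$ only requires $p_i\geq f(R)$, which is not automatic. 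If instead $f(R)>p_i$, I would try to show that the optimum places the large players on lower arms too, meaning the claimed structure should be phrased relative to the set of arms DORG actually uses. DORG compares exactly these same marginals, $\theta_k z\,(1-\text{load})$ versus an empty arm's $\theta_{k'}$. So the right statement is likely ``optimal and DORG agree'', proven by simulating DORG's choices rather than forcing arm indices $1,\dots,\kappa$.

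\textbf{Step 2 (DORG matches).} I will proceed by induction on $i\leq\kappa$. When DORG processes player $i$, only players $1,\dots,i-1$ have been placed, each alone on distinct arms, and those arms are saturated with negative marginal. DORG therefore chooses the empty arm of largest $\theta$, and in particular never stacks large players. Combined with Step 1, an optimal allocation can be transformed by swaps, each non-decreasing in value, so that its first $\kappa$ players match DORG's. If $\kappa>K$, DORG leaves the extra large players unassigned, consistent with isolation and with the fact that replacing a sole occupant by a larger one only helps.
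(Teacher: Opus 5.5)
Your isolation step is correct: the identity $p_j - f(R\cup\{j\}) = p_j(1-z(R)) - (1-p_j)f(R)\ge 0$ does show that a large player can be left alone on its arm. Your description of DORG on the first $\kappa$ players is also right: each one goes to the empty arm of largest mean, so player $i$ goes to arm $i$, and players beyond $K$ are left out.

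The gap is exactly the step you flagged. To move player $i$ onto arm $i$, you need $f(R)\le p_i$ for the current content $R$ of that arm, and you leave this open. You even consider a case $f(R)>p_i$ in which the optimum would place large players on worse arms. That case never occurs. Your fallback of phrasing things ``relative to the arms DORG uses'' cannot get around it either, because DORG does use arms $1,\dots,\min\{\kappa,K\}$, so matching it requires the very same inequality. The tool you reach for, $f(R)\le 1-z(R)$, is too weak, since $1-z(R)$ can be close to $1$.

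The missing ingredient, which the paper extracts from \Cref{lem:borne_sup_sachant_q_sature} (where $\lceil\frac{1-q}{q}\rceil=1$), is the following. If every player in $A$ has activation probability at most $q$ with $q>\frac12$, then $f(A)\le q$. To see this, note that $f(A)=(1-p_j)f(A\setminus\{j\})+p_j\,z(A\setminus\{j\})$ is affine in each $p_j$ separately. Its maximum over the box $[0,q]^{|A|}$ is therefore attained at a vertex, where it equals $nq(1-q)^{n-1}\le nq\,2^{-(n-1)}\le q$. In particular, any set of players with activation probabilities at most $\frac12$ has $f\le\frac12<p_i$.

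With this bound, run the induction in decreasing order of activation probability. Once players $1,\dots,i-1$ sit alone on arms $1,\dots,i-1$, every player on arm $i$ has activation probability at most $p_i$, so $f(S_{\mathrm{opt},i})\le p_i$.
\begin{itemize}
\item If player $i$ sits alone on some arm $k>i$, swap the contents of arms $i$ and $k$. This changes the payout by $(\theta_i-\theta_k)(p_i-f(S_{\mathrm{opt},i}))\ge 0$.
\item If player $i$ is unassigned, replace $S_{\mathrm{opt},i}$ by $\{i\}$. This changes the payout by $\theta_i(p_i-f(S_{\mathrm{opt},i}))\ge 0$.
\end{itemize}
This is the paper's argument, and it closes your Step 1 without any reformulation.
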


\begin{proof}
 We first note that for any $q>\frac12$, and a set $A$ of players with activation probabilities $p_j \leq q$, we have $f(A) \leq q$. This is a direct consequence of \Cref{lem:borne_sup_sachant_q_sature}.

 Assuming player $1$, with $p_1> \frac12$, is assigned to some arm $k$, this arm will not receive any supplementary players, as the marginal gain of doing so would be negative. 

Let $S_\text{opt}$ be an optimal allocation, and $k$ be the arm player $1$ is assigned to. Then $S_{\text{opt},k} = \{1\}$, and therefore $f(S_{\text{opt},k}) = p_1$. Since $\max_{i \in S_1} p_i\leq p_1 $,  $f(S_1)\leq f(S_k)$. Swapping $S_k$ and $S_1$ does not decrease the total payout, and therefore an optimal allocation assigning player $1$ to arm $1$ exists. We follow by induction for all other used players with $p_i > \frac12$.

Finally, let us prove that player $1$ is indeed assigned to an arm w.l.o.g. Arm $1$ contains players in $S_1$, which are all of activation probability $p_i \leq p_1$. As such, replacing $S_1$ with $\{1\}$ (i.e., assigning player $1$ instead of the current allocation) while keeping the rest of the allocation unchanged does not decrease the payout. We follow for the next arms by induction, while arms remain. If $\kappa>K$, the remaining players are left unassigned by both DORG and optimal allocations. 
\end{proof}

We therefore know DORG and some optimal allocation $S_{\text{opt}}$ agree on players with activation probability above $\frac{1}{2}$. Since said players are left as sole occupants of their respective arms, $S_{\text{DORG}}$ and $S_{\text{opt}}$ can disagree only on players with activation probability less than $\frac{1}{2}$, and on a subset of arms. As such,

\begin{align*}
\mu_\theta(S_{\text{opt}}) & = \sum_{k\leq K} \theta_kf(S_{\text{opt},k}) \\&= \sum_{k\leq \min\{\kappa, K\} } \theta_k p_k + \sum_{\min\{\kappa,K\} < k \leq K} \theta_k f(S_{\text{opt},k})
\end{align*}

while 
\begin{align*}  
\mu_\theta(S_{\text{DORG}}) &= \sum_{k\leq K} \theta_kf(S_{\text{DORG},k})\\& = \sum_{k\leq \min\{\kappa,K\} } \theta_k p_k + \sum_{\min\{\kappa,K\} < k \leq K} \theta_k f(S_{\text{DORG},k})
\end{align*}
The second term of the sum corresponds to DORG dealing with all players of activation probability below $\frac12$, and as such we have bounds on the ratio of the second terms of both expressions\footnote{Indeed, DORG corresponds to greedy on some fixed order, and as such benefits from instance independent bounds we established before.}, by \Cref{thm_1_e} and \Cref{lem:borne_sup_sachant_q_sature}, as either $\frac{1}{\sqrt e}$ in unsaturated instances, or $\frac2e$ in saturated ones.

As such, we have an instance-independent $\frac{1}{\sqrt e}$ lower bound on $\frac{\mu_\theta(S_{\text{DORG}})}{\mu_\theta(S_{\text{opt}})}$, i.e., DORG is a $\frac{1}{\sqrt e}$-oracle for the allocation problem \eqref{allocation_problem}.

\section{Magnitudes of constants in CUCB's distribution dependent bounds}\label{annexe_cucb_constants}
\citet{wangchen2017} also provide logarithmic bounds that depend on instance dependent constant quantities $\Delta_{\min}$ and $\Delta_{\max}$ : 

$$\text{Reg}_{\theta,\alpha,1}(T) \leq  \sum_{k=1}^{K} \frac{576\,\tilde K \ln T}{\Delta_{\min}^k} \;+\; \sum_{k=1}^{K}\left(\left\lceil \log_2 \frac{2\tilde K}{\Delta_{\min}^k}\right\rceil_0 + 2\right)\frac{\pi^2}{6}\,\Delta_{\max} + 4K $$
with $\tilde K = \min \{K,\,N\}$ the number of pullable arms, for each arm $k$
$$\Delta_{\min}^k = \inf_{S \mid\, f(S_k)>0,\ \Delta_S>0} \Delta_S, \qquad \Delta_{\max} = \max_{k\leq K}\, \sup_{S \mid\, f(S_k)>0,\ \Delta_S>0} \Delta_S$$

and $$\Delta_S = \max\left(0, \, \alpha \mu_\theta(S_{\text{opt}}) - \mu_\theta(S)\right)$$

By definition, $\Delta_{\max} = \alpha\,\mu_\theta(S_{\text{opt}}) - \min\{\mu_\theta(S) \mid S \neq \emptyset\}$. As such, we have the immediate bound
$$\Delta_{\max} \leq \alpha K$$

For $\Delta_{\min}^k$, we seek to find $$r_k = \max\left\{v \in V_k \mid\ v < \alpha\,\mu_\theta(S_{\text{opt}})\right\}$$ where  $V_k = \{\mu_\theta(S) \mid S_k \neq \emptyset\} \subset [0, \mu_\theta(S_{\text{opt}})]$ is the set of payouts for allocations that assign at least one player to arm $k$, and $$\Delta_{\min}^k = \alpha\,\mu_\theta(S_{\text{opt}}) - r_k$$

This quantity is instance-dependent in magnitude, and can be arbitrarily small even in non-degenerate scenarios: for example, for $\alpha = \frac{1}{\sqrt e}$, if $\theta_1 = 1$, $\theta_2 = 1$, and players have activation probabilities $\frac12, \frac12, \frac{1}{\sqrt e}-\frac12-\varepsilon$, then the optimal expected payout is $1$ by splitting the two good players across arms, and having the third player replace one of them on either the first or second arm gives $\Delta_{\min}^1 = \Delta_{\min}^2 = \varepsilon$.

\section{Proof of the existence of Nash equilibria}\label{annexe_jeu_potentiel}
We show that a variant of the allocation game is a potential game.

\begin{thm}\label{thm:potential}
Let $w_i=\ln\frac{1}{1-p_i}$ and $W_k(s)=\sum_{i\in S_k}w_i$, so that $z_k=e^{-W_k(s)}$. Then
$$\Phi(s)=\sum_{k=1}^K\Biggl[\ \sum_{\{i,j\}\subseteq S_k}w_iw_j\;-\;\ln\theta_k\,W_k(s)\ \Biggr]$$
satisfies, for every unilateral deviation $s\to s'$ of a player $j$,
$$\Phi(s')-\Phi(s)=-\,w_j\,\bigl(\ln u_j(s')-\ln u_j(s)\bigr):$$
the ``log-transformed'' game is a weighted potential game.
\end{thm}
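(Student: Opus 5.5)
The plan is a direct computation. First I will express the log-utility in terms of the weights $w_i$. From the definition, $u_j(s)=p_j\theta_{s_j}\prod_{i\neq j,\,s_i=s_j}(1-p_i)$, so
\begin{equation*}
\ln u_j(s)=\ln p_j+\ln\theta_{s_j}-\bigl(W_{s_j}(s)-w_j\bigr).
\end{equation*}
The term $\ln p_j$ does not depend on $s$. Consider a unilateral deviation of player $j$ from arm $a=s_j$ to arm $b=s'_j\neq a$, and write $W_a=W_a(s)$, $W_b=W_b(s)$ for the loads before the move. Since $j\notin S_b$, we have $W_b(s')-w_j=W_b$. Hence
\begin{equation*}
\ln u_j(s')-\ln u_j(s)=\ln\theta_b-\ln\theta_a-W_b+(W_a-w_j).
\end{equation*}
If $b=a$ the claim is trivial.

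Next I compute $\Phi(s')-\Phi(s)$. Only the brackets for arms $a$ and $b$ change. For the pairwise term, the pairs $\{i,j\}$ with $i\in S_a\setminus\{j\}$ are removed, which contributes $-w_j(W_a-w_j)$. The pairs $\{i,j\}$ with $i\in S_b$ are added, which contributes $+w_jW_b$. For the linear term, $-\ln\theta_a\,W_a$ increases by $w_j\ln\theta_a$, and $-\ln\theta_b\,W_b$ decreases by $w_j\ln\theta_b$. Summing these,
\begin{equation*}
\Phi(s')-\Phi(s)=w_j\Bigl[W_b-(W_a-w_j)+\ln\theta_a-\ln\theta_b\Bigr].
\end{equation*}
This is exactly $-w_j\bigl(\ln u_j(s')-\ln u_j(s)\bigr)$.

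The weights satisfy $w_j>0$ because $0<p_j<1$. So $-\Phi$ is a weighted potential for the game with utilities $\ln u_j$. That game has the same best responses as the original one, since $\ln$ is increasing.

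The only delicate step is the bookkeeping for the pairwise sum: $j$'s own weight must be excluded on the old arm, so the removed contribution is $W_a-w_j$ rather than $W_a$. This is what matches the factor $1/(1-p_j)$ in $u_j$. Having $\theta_k>0$ ensures the logarithms are defined.

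Existence of a pure Nash equilibrium then follows by taking a minimiser of $\Phi$ over the finite set $[K]^N$. At a minimiser no player can strictly increase $\ln u_j$, and hence cannot strictly increase $u_j$.
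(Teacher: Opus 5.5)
Your proposal is correct and follows essentially the same direct computation as the paper: you write $\ln u_j$ in terms of $W_{s_j}(s)-w_j$, track the pair-term change $w_jW_b-w_j(W_a-w_j)$ and the linear-term change $w_j(\ln\theta_a-\ln\theta_b)$, and match the two sides. Your closing existence argument, that a minimiser of $\Phi$ over the finite set $[K]^N$ is a pure equilibrium because $w_j>0$, is a self-contained substitute for the paper's appeal to the finite improvement property of potential games.
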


\begin{proof}
Since $z_k=e^{-W_k(s)}$, we have $\ln u_j(s)=\ln p_j+w_j+\ln\theta_{s_j}-W_{s_j}(s)$. If $j$ moves from arm $a$ to arm $b$, the pair terms of $\Phi$ change by $w_jW_b-w_j(W_a-w_j)$ and the linear terms by $w_j(\ln\theta_a-\ln\theta_b)$, so that
$$\Phi(s')-\Phi(s)=w_j\bigl[(W_b-\ln\theta_b)-(W_a-w_j-\ln\theta_a)\bigr]=-w_j\bigl(\ln u_j(s')-\ln u_j(s)\bigr).$$
\end{proof}

Consequently, pure equilibria exist, and every better-response path is finite \citep{monderer1999}.

\section{Unbounded PoA and PoS}\label{annexe_unbounded_poa}
Let $N\geq2$. We wish to prove that the instance with two arms of means $\theta_1=1$, $\theta_2=4^{-N}$ and $N$ players of activation probability $\frac{1}{2}$ satisfies $\mathrm{PoA}=\mathrm{PoS}\geq\frac{2^{N-1}}{N}$.

Assigning every player to arm $1$ is an equilibrium, since $\frac{z_1}{1-p_j}=2^{-(N-1)}\geq\theta_2$. 
Its expected payout is $N\cdot2^{-N}$. This is the only pure equilibrium, as any player assigned to arm $2$ could profitably deviate, and all players want to play.

Meanwhile, $\mu_\theta(S_{\text{opt}}) = \frac{1}{2}(1 + 4^{-N})$, assigning a single player to each of the two arms. The ratio of the two concludes.

\section{Proof of PoA bound in \Cref{thm:poa_bound}}\label{annexe_preuve_poa}
We aim to prove the lower bound on $\mu_{\theta}(S(s))$ and the upper bound on $\mu_{\theta}(S_{\text{opt}})$ independently, then take their ratio to reach the final result. We use throughout $l_k = \sum_{i \in S_k}\frac{p_i}{1-p_i}$.
\begin{itemize}
\item

Let $\lambda_k=\theta_k e^{-W_k}$, with $W_k$ the ``log-load''

$$W_k = -\sum_{i \in S_k} \ln(1-p_i)$$

For $p\in(0,q]$, we have 
$$(1-q)\frac{p}{1-p}\leq p\leq\ln\frac1{1-p}\leq\frac{p}{1-p}$$ and therefore $$(1-q)l_k\leq W_k\leq l_k$$

If arm $k$ is occupied by some player $j$, the equilibrium condition imposes $\frac{\lambda_k}{1-p_j}\geq\lambda_{k'}$ for all $k'$, which gives  $\lambda_k\geq(1-q)\max_{k'}\lambda_{k'}$.

Since all players opt to play, $\sum_k l_k=L \geq \sum_k W_k$, so some arm $k'$ has $W_{k'}\leq x$. As such, since $\lambda_k=\theta_k e^{-W_k}$, we have by bounding each term $$\max_{k'}\lambda_{k'} \geq (\bar\theta-\varepsilon_1)e^{-x}$$

Finally, since $\mu_\theta(S)=\sum_k\lambda_kl_k$, the equilibrium payout is at least $$(1-q)(\bar\theta-\varepsilon_1)e^{-x}L$$

\item
For any allocation $S$, by the inequalities above

$$f(S_k)=l_ke^{-W_k}\leq l_ke^{-(1-q)l_k}$$

and since $h \geq g$, 

\begin{align*}
l_ke^{-(1-q)l_k} &= \frac1{1-q}g\bigl((1-q)l_k\bigr)\\&\leq\frac1{1-q}h\bigl((1-q)l_k\bigr)
\end{align*}

By Jensen 
\begin{align*}
\sum_k \theta_k f(S_k)&\leq   K \sum_k   \frac1{K(1-q)}(\bar\theta + \varepsilon_2)h\bigl((1-q)l_k\bigr)\\& \leq  \frac {(\bar\theta + \varepsilon_2)K}{1-q}h \bigl((1-q)\frac{L}{K}\bigr)
\end{align*}

\end{itemize}

The final result is the ratio of the two bounds, in cases where $h = g$ and not.

\end{document}